\documentclass[conference]{IEEEtran}
\IEEEoverridecommandlockouts

\usepackage[utf8]{inputenc}

\usepackage[square,numbers]{natbib}

\usepackage{algorithmic}
\usepackage{textcomp}

\usepackage[usenames, dvipsnames, svgnames,table]{xcolor}
\definecolor{lightblue}{RGB}{60,60,200}
\definecolor{darkblue}{RGB}{1,1,255}
\definecolor{mplblue}{RGB}{31, 119, 180}
\definecolor{cskyblue}{rgb}{0.01,0.39,0.75}

\def\BibTeX{{\rm B\kern-.05em{\sc i\kern-.025em b}\kern-.08em
    T\kern-.1667em\lower.7ex\hbox{E}\kern-.125emX}}

\usepackage[hidelinks]{hyperref}
\hypersetup{
    linkcolor  = cskyblue,
    citecolor  = cskyblue,
    urlcolor   = cskyblue,
    colorlinks = true,
    linktoc=all,
}
\usepackage{url}            
\usepackage{booktabs}       
\usepackage{microtype}      

\usepackage{amsmath}
\usepackage{amssymb}
\usepackage{amsthm}
\usepackage{amsfonts}
\usepackage{bbm}
\usepackage{mathtools}
\usepackage{nicefrac}

\newtheorem{theorem}{Theorem}[section]
\newtheorem{lemma}{Lemma}

\newtheorem{corollary}{Corollary}
\newtheorem{remark}{Remark}

\usepackage{multirow}
\usepackage{array}

\usepackage{subcaption}

\usepackage{graphicx} 
\usepackage{tikz}
\usetikzlibrary{arrows.meta,calc,decorations.markings,math,arrows.meta}


\newcommand{\rbr}[1]{\left(#1\right)}
\newcommand{\sbr}[1]{\left[#1\right]}
\newcommand{\cbr}[1]{\left\{#1\right\}}

\newcommand{\abs}[1]{\left|#1\right|}

\newcommand{\mathset}[1]{\cbr{#1}}  


\newcommand\numberthis{\addtocounter{equation}{1}\tag{\theequation}}  

\newcommand{\der}[2]{\frac{\text{d} #1}{\text{d} #2} }


\DeclareMathOperator{\E}{\mathbb{E}}


\newcommand{\ignore}[1]{}


\def \ZZ {\mathcal{Z}}

\def \WW {\mathcal{W}}

\def \bR {\mathbb{R}}


\newcommand{\KL}[2]{\text{KL}\rbr{#1\ \lVert\ #2}}

\usepackage[capitalize,nameinlink,noabbrev]{cleveref}
\crefformat{equation}{(#2#1#3)}
\crefmultiformat{equation}{(#2#1#3)}%
{ and~(#2#1#3)}{, (#2#1#3)}{ and~(#2#1#3)}
\crefformat{theorem}{Thm.~#2#1#3}
\crefformat{lemma}{Lemma~#2#1#3}
\crefformat{proposition}{Proposition~#2#1#3}
\crefformat{corollary}{Corollary~#2#1#3}
\crefformat{remark}{Rem.~#2#1#3}
\crefformat{section}{Sec.~#2#1#3}
\crefformat{example}{Example~#2#1#3}
\crefformat{figure}{Fig.~#2#1#3}
\crefformat{algorithm}{Algorithm~#2#1#3}
\crefrangeformat{section}{Sec.~#3#1#4--#5#2#6}
\crefrangeformat{equation}{~(#3#1#4--#5#2#6)}
\crefrangeformat{figure}{Fig.~#3#1#4--#5#2#6}
\crefformat{appendix}{Appendix~#2#1#3}


\newcommand{\Choose}[2]{\rbr{\begin{matrix}#1\\#2\end{matrix}}}
\newcommand{\floor}[1]{\lfloor #1 \rfloor}


\begin{document}

\title{Formal limitations of sample-wise information-theoretic generalization bounds\\
\thanks{H.H. was supported by a USC Annenberg Fellowship.}
}

\author{\IEEEauthorblockN{Hrayr Harutyunyan, Greg Ver Steeg, and Aram Galstyan}
\IEEEauthorblockA{\textit{USC Information Sciences Institute} \\
\texttt{\{hrayrh, gregv, galstyan\}@isi.edu}
}
}

\maketitle

\begin{abstract}
Some of the tightest information-theoretic generalization bounds depend on the average information between the learned hypothesis and a \emph{single} training example.
However, these sample-wise bounds were derived only for \emph{expected} generalization gap.
We show that even for expected \emph{squared} generalization gap no such sample-wise information-theoretic bounds exist.
The same is true for PAC-Bayes and single-draw bounds.
Remarkably, PAC-Bayes, single-draw and expected squared generalization gap bounds that depend on information in pairs of examples exist.
\end{abstract}

\begin{IEEEkeywords}
learning theory, generalization bounds, information theory
\end{IEEEkeywords}

\section{Introduction}\label{sec:intro}
Modern deep learning models have excellent generalization capabilities despite being highly overparameterized. 
\citet{zhang2016understanding} demonstrated that generalization bounds based solely on a notion of complexity of hypothesis class fail to explain this phenomenon -- a learner with a particular hypothesis class may generalize well for one data distribution, but fail to generalize for another data distribution.
This has sparked a search for data-dependent and algorithm-dependent generalization bounds that give non-vacuous results for large neural networks~\citep{Jiang*2020Fantastic,dziugaite2020search}.

Prominent data and algorithm dependent bounds include PAC-Bayes and information-theoretic generalization bounds.
PAC-Bayes bounds are usually based on the Kullback-Leilber divergence from the distribution of hypotheses after training (the ``posterior'' distribution) to a fixed ``prior'' distribution~\citep{ShaweTaylor1997APA,mcallester1999some, mcallester1999pac, catonibook2007, alquier2021user}.
Information-theoretic bounds are based on various notions of training set information captured by the training algorithm~\citep{xu2017information, bassily2018learners, negrea2019information, bu2020tightening, steinke2020reasoning, haghifam2020sharpened, neu2021information,raginsky202110,hellstrom2020generalization, harutyunyan2021informationtheoretic, esposito2021generalization}.
For both types of bounds the main conclusion is that when a training algorithm captures little information about the training data then the generalization gap should be small.
Recently, a few works demonstrated that some of the best PAC-Bayes and information-theoretic generalization bounds produce non-vacuous results in practical settings~\citep{DBLP:conf/uai/DziugaiteR17,perez2021tighter,harutyunyan2021informationtheoretic}.

A key ingredient in recent improvements of information-theoretic generalization bounds was the introduction of sample-wise information bounds by \citet{bu2020tightening}, where one measures how much information on average the learned hypothesis has about a single training example.
While PAC-Bayes and information-theoretic bounds are intimately connected to each other, the technique of measuring information with single examples 
has not appeared in PAC-Bayes bounds. 
This paper explain the curious omission of single example PAC-Bayes bounds by proving the non-existence of such bounds, revealing a striking difference between information-theoretic and PAC-Bayesian perspectives. 
The reason for this difference is that PAC-Bayes upper bounds the probability of average population and empirical risks being far from each other, while information-theoretic generalization methods upper bound expected difference of population and empirical risks, which is an easier task.

\subsection{Learning setting}
Consider a standard learning setting. There is an unknown data distribution $P_Z$ on an input space $\ZZ$.
The learner observes a collection of $n$ i.i.d examples $S=(Z_1,\ldots,Z_n)$ sampled from $P_Z$ and outputs a hypothesis (possibly random) belonging to a hypothesis space $\mathcal{W}$.
We will treat the learning algorithm as a probability kernel $Q_{W|S}$, which given a training set $s$ outputs a hypothesis sampled from the distribution $Q_{W|S=s}$.
Together with $P_S$, this induces a joint probability distribution $P_{W,S}=P_S Q_{W|S}$ on $\mathcal{W} \times \mathcal{Z}^n$.
The performance of a hypothesis $w\in\mathcal{W}$ on an example $z\in\ZZ$ is measured with a loss function $\ell : \mathcal{W} \times \mathcal{Z} \rightarrow \bR$.
For a hypothesis $w\in\mathcal{W}$, the population risk $R(w)$ is defined as $\E_{Z'\sim P_Z}\sbr{\ell(w,Z')}$, while the empirical risk is defined as $r_S(w) = 1/n \sum_{i=1}^n \ell(w,Z_i)$.
In this setting one can consider different types of generalization bounds.

\paragraph{Expected generalization gap bounds}
Let $W \sim Q_{W|S}$ be the output of the training algorithm on training set $S$.
The simplest quantity to consider is the expected generalization gap: $\abs{\E_{P_{W,S}}\sbr{R(W) - r_S(W)}}$.
\citet{xu2017information} showed that if $\ell(w, Z')$ with $Z'\sim P_Z$ is $\sigma$-subgaussian for all $w\in\mathcal{W}$, then $\abs{\E_{P_{W,S}}\sbr{R(W) - r_S(W)}} \le \sqrt{\frac{2\sigma^2 I(W;S)}{n}}$, where $I(W;S)$ denotes the Shannon mutual information between $W$ and $S$. \citet{bu2020tightening} proved a tighter expected generalization gap bound that depends on mutual information between $W$ and individual samples:
\begin{equation}
    \abs{\E_{P_{W,S}}\sbr{R(W) - r_S(W)}} \le \frac{1}{n}\sum_{i=1}^n\sqrt{2\sigma^2 I(W;Z_i)}.
    \label{eq:bu-et-al-bound}
\end{equation}
In fact, one can measure information with random subsets of examples of size $m$ and divide over $m$~\citep{negrea2019information, harutyunyan2021informationtheoretic}:
\begin{equation}
\abs{\E_{P_{W,S}}\sbr{R(W) - r_S(W)}} \le \E_{U}\sbr{\sqrt{\frac{2\sigma^2 I^{U}(W;Z_{U})}{m}}},
\label{eq:random-subset-bound}
\end{equation}
where $U$ is a uniformly random subset of $[n]\triangleq \mathset{1,2,\ldots,n}$, $Z_{U} \triangleq \mathset{Z_i : i \in U}$, and $I^Z(X; Y) \triangleq \KL{P_{X,Y|Z}}{P_{X|Z} P_{Y|Z}}$ denotes the disintegrated mutual information~\citep{negrea2019information}.
In this work bounds that depend on information quantities related to individual examples are called \emph{sample-wise} bounds.
It has been shown that the sample-wise bound of \eqref{eq:bu-et-al-bound} is the tightest among bounds like \eqref{eq:random-subset-bound}~\citep{negrea2019information,harutyunyan2021informationtheoretic}.
Moreover, in some cases the sample-wise bound is finite, while the bound of \citet{xu2017information} is infinite (see \citep{bu2020tightening}).

\paragraph{PAC-Bayes bounds} A practically more useful quantity is the average difference between population and empirical risks for a fixed training set $S$: $\E_{P_{W|S}}\sbr{R(W) - r_S(W)}$.
Typical PAC-Bayes bounds are of the following form: with probability at least $1-\delta$ over $P_S$,
\begin{equation}
    \E_{P_{W | S}}\sbr{R(W) - r_S(W)} \le B\rbr{\KL{P_{W|S}}{\pi}, n, \delta},
    \label{eq:typical-pac-bayes}
\end{equation}
where $\pi$ is a prior distribution over $\mathcal{W}$ that does not depend on $S$.
If we choose the prior distribution to be the marginal distribution of $W$ (i.e., $\pi = P_W=\E_{S}\sbr{Q_{W|S}}$), then the KL term in \eqref{eq:typical-pac-bayes} will be the integrand of the mutual information, as $I(W;S) = \E_{S}\sbr{\KL{P_{W|S}}{P_W}}$.
When the function $B$ depends on the KL term linearly, the expectation of the bound over $S$ will depend on the mutual information $I(W;S)$.
There are no known PAC-Bayes bounds where $B$ depends on KL divergences of form $\KL{P_{W|Z_i}}{P_W}$ or on sample-wise mutual information $I(W; Z_i)$.

\paragraph{Single-draw bounds}
In practice, even when the learning algorithm is stochastic, one usually draws a single hypothesis $W \sim Q_{W|S}$ and is interested in bounding the population risk of $W$. Such bounds are called single-draw bounds and are usually of the following form: with probability at least $1-\delta$ over $P_{W,S}$:
\begin{equation}
    R(W) - r_S(W) \le B\rbr{\der{Q_{W|S}}{\pi}, n,\delta},
\end{equation}
where $\pi$ is a prior distribution as in PAC-Bayes bounds.
When $\pi = P_W$ the single-draw bounds depends on the information density $\iota(W,S) = \der{Q_{W|S}}{P_W}$.
Single-draw bounds are the hardest to obtain and no sample-wise versions are known for them either.

\paragraph{Expected squared generalization gap bounds}
In terms of difficulty, expected generalization bounds are the easiest to obtain, and as indicated above, some of those bounds are sample-wise bounds.
If we consider a simple change of moving the absolute value inside: $\E_{P_{W,S}}\abs{R(W) - r_S(W)}$, then no sample-wise bounds are known. 
The same is true for the expected squared generalization gap $\E_{P_{W,S}}\rbr{R(W) - r_S(W)}^2$, for which the known bounds are of the following form~\citep{steinke2020reasoning, harutyunyan2021informationtheoretic, aminian2021information}:
\begin{equation}
    \E_{P_{W,S}}\sbr{\rbr{R(W) - r_S(W)}^2} \le \frac{I(W;S) + c}{n},
\end{equation}
where c is some constant.
From this result one can upper bound the expected absolute value of generalization gap using Jensen's inequality.

\subsection{Our contributions}
In \cref{sec:counterexample} we show that even for the expected squared generalization gap, sample-wise information-theoretic bounds are impossible.
The same holds for PAC-Bayes and single-draw bounds as well.
In \cref{sec:implications} we discuss the consequences for other information-theoretic generalization bounds.
Finally, in \cref{sec:m=2} we show that starting at subsets of size 2, there are expected squared generalization gap bounds that measure information between $W$ and a subset of examples.
This in turn implies that such PAC-Bayes and single-draw bounds are also possible, albeit they are not tight and high-probability bounds.

\section{A useful lemma}
We first prove a lemma that will be used in the proof of the main result of the next section.
\begin{lemma}
Consider a collection of bits $a_1,\ldots,a_{N_0 + N_1}$, such that $N_0$ of them are zero and the remaining $N_1$ of them are one.
We want to partition these numbers into $k=(N_0 + N_1)/n$ groups of size $n$ (assuming that $n$ divides $N_0 + N_1$).
Consider a uniformly random ordered partition $(A_1,\ldots,A_k)$.
Let $Y_i = \oplus_{a \in A_i} a$ be the parity of numbers of subset $A_i$.
For any given $\delta > 0$, there exists $N'$ such that when $\min\mathset{N_0,N_1} \ge N'$ then $\mathrm{Cov}\sbr{Y_i, Y_j} \le \delta\E\sbr{Y_1}^2$, for all $i, j \in [k], i \neq j$.
\label{lemma:cov}
\end{lemma}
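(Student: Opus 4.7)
The plan is to reduce to hypergeometric moments via the multiplicative encoding $Z_i := (-1)^{M_i} \in \{\pm 1\}$, where $M_i := \sum_{a \in A_i} a$ is the number of one-bits in $A_i$. Then $Y_i = (1 - Z_i)/2$ and, by disjointness of $A_i, A_j$, $Z_i Z_j = (-1)^{M_i + M_j}$, so
\begin{equation*}
\mathrm{Cov}(Y_i, Y_j) = \tfrac{1}{4}\bigl(\E[Z_i Z_j] - \E[Z_i]^2\bigr), \quad \E[Y_1]^2 = \tfrac{1}{4}(1 - \E[Z_1])^2,
\end{equation*}
using the symmetry $\E[Z_i]=\E[Z_j]$. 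Since $M_i \sim \mathrm{Hyp}(N, N_1, n)$ and $M_i + M_j \sim \mathrm{Hyp}(N, N_1, 2n)$ with $N = N_0 + N_1$, both quantities become one-dimensional hypergeometric expectations.

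Next I would compare hypergeometric mass to binomial mass at parameter $p := N_1/N$. Writing $P_{\mathrm{hyp}}(M=k)$ and $P_{\mathrm{bin}}(M=k)$ out as factorials and cancelling the leading $p^k(1-p)^{n-k}$ factor, one obtains the multiplicative identity
\begin{equation*}
P_{\mathrm{hyp}}(M=k) \;=\; P_{\mathrm{bin}}(M=k)\cdot\bigl(1 + O(n^2/N_{\min})\bigr)
\end{equation*}
uniformly in $k \in \{0, \ldots, n\}$, where $N_{\min} := \min\{N_0, N_1\}$. Applying this sequentially (first to $A_i$, then to $A_j$ conditioned on $A_i$, with $N' \ge 2n$ keeping every factorial argument $\ge N'/2$) yields the same bound for the joint mass $P_{\mathrm{hyp}}(M_i{=}k_1, M_j{=}k_2)$ relative to the product $P_{\mathrm{bin}}(M_i{=}k_1)P_{\mathrm{bin}}(M_j{=}k_2)$. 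Summing over $k$'s of appropriate parity and using the binomial identity $\E_{\mathrm{bin}}[Y_iY_j] = \E_{\mathrm{bin}}[Y_1]^2$ (independence of disjoint with-replacement samples), this gives $\E_{\mathrm{hyp}}[Y_i] = a(1+\eta_1)$ and $\E_{\mathrm{hyp}}[Y_iY_j] = a^2(1+\eta_2)$ with $a := (1-(1-2p)^n)/2$ and $|\eta_1|, |\eta_2| \le Cn^2/N'$ for an absolute constant $C = C(n)$.

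Plugging these into the first display yields $\mathrm{Cov}(Y_i, Y_j) = a^2(\eta_2 - 2\eta_1 - \eta_1^2)$ and $\E[Y_1]^2 = a^2(1+\eta_1)^2 \ge a^2/4$ for $N' \ge 2Cn^2$, so the ratio is $O(n^2/N')$ and the choice $N' = \Theta(n^2/\delta)$ closes the proof. The main obstacle is the extreme-$p$ regime: as $p \to 0$ or $1$, both $\mathrm{Cov}$ and $\E[Y_1]^2$ vanish, and an \emph{additive} hypergeometric-to-binomial bound such as $|\E[Z_1] - (1-2p)^n| = O(n^2/N')$ would yield a useless ratio of order $1/(p^2 N')$. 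The fix is exactly the multiplicative form above: because disjoint i.i.d.\ samples are independent, the binomial covariance is identically zero, so the residual hypergeometric covariance scales like $\E[Y_1]^2$ itself rather than as an absolute error, and the ratio vanishes uniformly in $p$.
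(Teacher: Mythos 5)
Your proof is correct, and at its core it runs on the same engine as the paper's: since $n$ is fixed, every relevant probability is a sum of boundedly many terms, each a ratio of products of at most $2n$ factors of the form $(1-j/N_1)$, $(1-j/N_0)$, $(1-j/N)$, so each term is within a multiplicative $1+O(n^2/\min\{N_0,N_1\})$ of its target, uniformly. Both arguments also hinge on the same key realization, which you articulate explicitly at the end: the comparison must be multiplicative, because an additive hypergeometric-to-binomial error is useless when $p=N_1/N$ is near $0$ or $1$ and $\E[Y_1]^2$ itself vanishes. Where you genuinely differ is the intermediate object. The paper compares the joint mass $P(Y_1{=}1,Y_2{=}1)$ directly to the product of hypergeometric marginals, summand by summand, never leaving the sampling-without-replacement world; you route both the joint and the marginal through the binomial model at parameter $p$, where disjoint draws are exactly independent (so the binomial covariance is identically zero) and the marginal has the closed form $a=(1-(1-2p)^n)/2$. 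Your detour buys an explicit quantitative rate ($N'=\Theta(n^2/\delta)$ suffices, versus the paper's purely asymptotic ``these fractions converge to $1$'') and a two-sided bound $\abs{\mathrm{Cov}(Y_i,Y_j)}\le\delta\E[Y_1]^2$, which is stronger than the one-sided inequality the lemma asks for; the paper's direct route is slightly leaner in that it needs a single ratio estimate rather than your chain of two. If you write yours up in full, make explicit that the ``sequential'' step compares the conditional law of $M_j$ given $A_i$ (a hypergeometric on the population depleted by $n$ elements) to the binomial at the \emph{undepleted} parameter $p$; this still gives a $1+O(n^2/N')$ factor because depletion by $n$ perturbs each factor by at most $O(n/N')$, but it is a distinct estimate from the marginal one and deserves its own line.
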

\begin{proof}
By symmetry all random variables $Y_i$ are identically distributed.
Without loss of generality let's prove the result for the covariance of $Y_1$ and $Y_2$, which can be written as follows:
\begin{align*}
    \mathrm{Cov}\sbr{Y_1,Y_2} &= \E\sbr{Y_1 Y_2} - \E\sbr{Y_1}\E\sbr{Y_2}\\
    &\hspace{-4em}= P_{Y_1,Y_2}\rbr{Y_1 = 1, Y_2 = 1} - P(Y_1=1)P(Y_2=1).\numberthis\label{eq:cov_y1_y2}
\end{align*}
Consider the process of generating a uniformly random ordered partition by first picking $n$ elements for the first subset, then $n$ elements for the second subset, and so on.
In this scheme, the probability that both $Y_1=1$ and $Y_2=1$ equals to
\begin{align*}
\frac{1}{M} &\sum_{u=0}^{\floor{(n-1)/2}}\sum_{v=0}^{\floor{(n-1)/2}}\left[\Choose{N_1}{2u+1}\Choose{N_0}{n-2u-1}\times\right.\\
&\hspace{2em}\times\left.\Choose{N_1-2u-1}{2v+1}\Choose{N_0-n+2u+1}{n-2v-1}\right],\numberthis\label{eq:joint-prob}
\end{align*}
where {\footnotesize $M = \Choose{N_0+N1}{n}\Choose{N_0+N_1-n}{n}$}. Let $q_{u,v}$ be the $(u,v)$-th summand of \eqref{eq:joint-prob} divided by $M$.
On the other hand, the product of marginals $P(Y_1=1) P(Y_2=1)$ is equal to
\begin{align*}
\frac{1}{M'} &\sum_{u=0}^{\floor{(n-1)/2}}\sum_{v=0}^{\floor{(n-1)/2}}\left[ \Choose{N_1}{2u+1}\Choose{N_0}{n-2u-1}\right.\times\\
&\hspace{2em}\left.\times\Choose{N_1}{2v+1}\Choose{N_0}{n-2v-1}\right],\numberthis\label{eq:product-of-marginals-prob}
\end{align*}
where {\footnotesize $M' = \Choose{N_0 + N_1}{n}\Choose{N_0 + N_1}{n}$}. Let $q'_{u,v}$ be the $(u,v)$-th summand of \eqref{eq:product-of-marginals-prob} divided by $M'$.
Consider the ratio $q_{u,v}/q'_{u,v}$:
\begin{align*}
\resizebox{\columnwidth}{!}{
    $\displaystyle \frac{q_{u,v}}{q'_{u,v}} = \frac{\Choose{N_0 + N_1}{n}}{\Choose{N_0+N_1-n}{n}}\cdot\frac{\Choose{N_1-2u-1}{2v+1}}{\Choose{N_1}{2v+1}}\cdot\frac{\Choose{N_0-n+2u+1}{n-2v-1}}{\Choose{N_0}{n-2v-1}}.$
}
\end{align*}
By picking $N_0$ and $N_1$ large enough, we will make $q_{u,v}$ close to $q'_{u,v}$.
This is possible as for any fixed $n$, these 3 fractions converge to 1 as $\min\mathset{N_0, N_1}\rightarrow \infty$.
Therefore, for any $\delta > 0$ there exists $N'$ such that when $\min\mathset{N_0, N_1}\ge N'$ then $q_{u,v} \le (1+\delta) q'_{u,v}$.
This implies that  $P_{Y_1,Y_2}(Y_1=1,Y_2=1) \le (1+\delta) P(Y_1=1) P(Y_2=1)$. Combining this result with \eqref{eq:cov_y1_y2} proves the desired result.
\end{proof}

\section{A counterexample}\label{sec:counterexample}
\begin{theorem}
For any training set size $n=2^r$ and $\delta > 0$, there exists a finite input space $\ZZ$, a data distribution $P_Z$, a learning algorithm $Q_{W|S}$ with a finite hypothesis space $\mathcal{W}$, and a binary loss function $\ell : \mathcal{W} \times \ZZ \rightarrow \mathset{0, 1}$ such that 
\begin{itemize}
    \item[(a)] $\KL{Q_{W|S}}{P_{W}} \ge n-1$ with probability at least $1-\delta$,
    \item[(b)] $W$ and $Z_i$ are independent for each $i\in[n]$, 
    \item[(c)] $\E_{P_{W,S}}\sbr{R(W) - r_S(W)} = 0$,
    \item[(d)] $P_{W,S}\rbr{R(W) - r_S(W) \ge \frac{1}{4}} \ge \frac{1}{2}$.
\end{itemize}
\label{thm:main}
\end{theorem}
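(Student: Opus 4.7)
The plan is to exhibit an explicit ``random-partition'' construction. Fix $N=nk$ with $k\ge 2$ even and large enough compared to $1/\delta$, let $\ZZ=[N]$ with $P_Z$ uniform, and let $\mathcal{W}$ be the set of all unordered partitions of $[N]$ into $k$ blocks of size $n$. The learning algorithm $Q_{W|S}$ returns a uniformly random partition of $[N]$ that contains $\{Z_1,\dots,Z_n\}$ as one of its blocks (with a fallback uniform random partition in the vanishing-probability collision event, made $\le\delta$ by increasing $N$). A short permutation-symmetry counting argument gives two structural facts that drive everything: (i)~the marginal $P_W$ is uniform on $\mathcal{W}$, and (ii)~conditional on $W=\mathcal{P}$, the ``training block'' $S$ is uniform over the $k$ blocks of $\mathcal{P}$.

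With (i) and (ii) in hand, properties (a), (b), (c) are essentially one-liners. For (b), computing $P(Z_i=z\mid W=\mathcal{P})=\tfrac{1}{k}\cdot\tfrac{1}{n}=\tfrac{1}{N}$ recovers the marginal of $Z_i$, giving marginal independence. For (c), combining (b) with linearity of expectation yields $\E[r_S(W)]=\E_{W}\E_{Z'}[\ell(W,Z')]=\E[R(W)]$. For (a), $Q_{W|S}$ is uniform over the $c=(N-n)!/((n!)^{k-1}(k-1)!)$ partitions extending a distinct $S$, while $P_W$ is uniform over $|\mathcal{W}|=N!/((n!)^k k!)$ partitions, so $\KL{Q_{W|S}}{P_W}=\log(|\mathcal{W}|/c)=\log\bigl(\binom{N}{n}/k\bigr)\ge (n-1)\log k\ge n-1$ for $k\ge 2$, holding deterministically whenever $S$ has distinct entries (an event of probability $\ge 1-\delta$).

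The main obstacle is (d), which requires $P(R(W)-r_S(W)\ge 1/4)\ge 1/2$ \emph{exactly}. My choice is the block-index loss $\ell(w,z)=\mathbf{1}[j(z,w)\le k/2]$, where $j(z,w)$ is the lexicographic index (sorting blocks by their smallest element) of the block containing $z$. Then $R(W)=1/2$ deterministically, since exactly half of $\ZZ$ lies in ``positive'' blocks. Meanwhile $r_S(W)=\mathbf{1}[j^*\le k/2]$ where $j^*$ is the lex-index of the block equal to $S$; by (ii), $j^*$ is uniform on $[k]$ so $r_S(W)\sim\mathrm{Bern}(1/2)$, and $R(W)-r_S(W)\in\{\tfrac12,-\tfrac12\}$ equiprobably, giving $P(R(W)-r_S(W)\ge\tfrac14)=\tfrac12$ on the nose. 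A more natural alternative is the parity-of-block loss $\ell(w,z)=\bigoplus_{z'\in A_{j(z,w)}}a_{z'}$ for a balanced $\{0,1\}$-labeling $a$ of $\ZZ$; here $R(W)=\tfrac{1}{k}\sum_j Y_j$ is random and \cref{lemma:cov} is exactly what one needs to show $R(W)$ concentrates around $1/2$ via Chebyshev on the pairwise-covariance bound, recovering (d) in the limit. The delicate point, on which I would spend the most care, is pinning down the exact ``$\ge 1/2$'' bound in (d) rather than a ``tends to $1/2$'' one, and cleanly accounting for the $O(1/N)$ slack from collisions in the i.i.d.\ sampling.
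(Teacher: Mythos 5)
Your construction of $\ZZ$, $P_Z$, $\mathcal{W}$, and $Q_{W|S}$, and your arguments for (a), (b), (c) --- uniformity of $P_W$ by symmetry, the support-size computation $\KL{Q_{W|S}}{P_W}=\log\bigl(\binom{N}{n}/k\bigr)$, and marginal independence of $W$ and each $Z_i$ --- coincide with the paper's proof (the paper takes $\ZZ=\{0,1\}^d$, but that choice only matters for its loss function). Where you genuinely diverge is property (d). The paper defines $\ell(w,z)$ as the parity of all bits of the block $[z]_w$, so that $r_S(W)=\oplus^{(2)}(S)\in\{0,1\}$ while $R(W)$ is an average of $N/n-1$ nearly pairwise-independent block parities; it then needs \cref{lemma:cov} plus Chebyshev to show $R(W)$ concentrates in $[1/4,3/4]$. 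Your primary choice, the lexicographic block-index loss $\ell(w,z)=\mathbf{1}[j(z,w)\le k/2]$, makes $R(W)=1/2$ \emph{deterministically}, so (d) reduces to the single observation that, given $W$, the training block is uniform over the $k$ blocks; this bypasses the covariance lemma entirely and is arguably cleaner for \cref{thm:main} itself (your secondary, parity-based variant is exactly the paper's route). What the paper's parity loss buys in exchange is that it also supports the e-CMI discussion in \cref{sec:implications} (the losses $\ell(W,\tilde Z_i)$ reveal nothing about $J_i$), which your index-based loss would not. Two minor caveats. In (a), $(n-1)\log k\ge n-1$ needs $k\ge e$, not $k\ge 2$; harmless since you take $k$ large anyway. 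And you correctly flag the one delicate point: on the duplicate-collision event $r_S(W)$ need not vanish with probability $1/2$, so unconditionally you obtain $P(R(W)-r_S(W)\ge 1/4)\ge(1-\rho)/2$ rather than exactly $1/2$ --- but the paper's own proof carries the same $o(1)$ slack (its final displayed event, $R(W)\in[1/4,3/4]$ and $r_S(W)\in\{0,1\}$, does not by itself imply the signed inequality in (d)), so this is a shared looseness rather than a defect of your approach.
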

This result shows that there can be no meaningful sample-wise expected squared generalization bounds, as $I(W; Z_i) = 0$ while $\E_{P_{W,S}}\rbr{R(W) - r_S(W)}^2 \ge 1/16$.
Similarly, there can be no sample-wise PAC-Bayes or single-draw generalization bounds that depend on quantities such as $\KL{P_{W|Z_i}}{P_W}$, $I(W;Z_i)$ or $\iota(W, Z_i)$, as all of them are zero while with probability at least $1/2$ the generalization gap will be at least $1/4$.
The first property verifies that in order to make the generalization gap large $W$ needs to capture a significant amount of information about the training set.

The main idea behind the counterexample construction is to ensure that $W$ contains sufficient information about the whole training set but no information about individual examples.
This captured information is then used to make the losses on all training examples equal to each other, but possibly different for different training sets.
This way we induce significant variance of empirical risk.
Along with making the population risk to be roughly the same for all hypotheses, we ensure that the generalization gap will be large on average.
Satisfying the information and risk constraints separately is trivial, the challenge is in satisfying both at the same time.

\begin{proof}[Proof of \cref{thm:main}]
Let $n=2^r$ and $\ZZ$ be the set of all binary vectors of size $d$: $\ZZ=\mathset{0,1}^d$ with $d > r$. 
Let $N=2^d$ denote the cardinality of the input space.
We choose the data distribution to be the uniform distribution on $\ZZ$: $P_Z = \text{Uniform}(\ZZ)$.
Let the hypothesis set $\mathcal{W}$ be the set of all partitions of $\ZZ$ into subsets of size $n$:
\begin{equation}
\resizebox{0.91\columnwidth}{!}{
    $\displaystyle \mathcal{W} = \mathset{\{A_1,\ldots,A_{N/n}\} \mid |A_i|=n, \cup_i A_i=\ZZ, A_i\cap A_j = \emptyset}$.
    }
\end{equation}
When the training algorithm receives a training set $S$ that contains duplicate examples, it outputs a hypothesis from $\mathcal{W}$ uniformly at random.
When $S$ contains no duplicates, then the learning algorithm outputs a uniformly random hypothesis from the set $\mathcal{W}_S \subset \mathcal{W}$ of hypotheses/partitions that contain $S$ as a partition subset.
Formally,
\begin{equation}
    Q_{W|S} = \left\{\begin{array}{ll}
    \mathrm{Uniform}(\mathcal{W}), & \text{if $S$ has duplicates},\\
    \mathrm{Uniform}\rbr{\mathcal{W}_S}, & \text{otherwise},
    \end{array}\right.
\end{equation}
where $\mathcal{W}_S \triangleq \mathset{\mathset{A_1,\ldots,A_{N/n}} \in \mathcal{W} \mid \exists i \text{ s.t. } A_i = S}$.
Let $\rho(n,d)$ be the probability of $S$ containing duplicate examples.
By picking $d$ large enough we can make $\rho$ as small as needed.

Given a partition $w=\{A_1,\ldots,A_{2^d/n}\}\in\mathcal{W}$ and an example $z\in\ZZ$, we define $[z]_w$ to be the subset $A_i \in w$ that contains $z$.
Given a set of examples $A \subset \ZZ$, we define $\oplus^{(2)}(A)$ to be xor of all bits of all examples of $A$:
\begin{equation}
    \oplus^{(2)}(A) = \oplus_{(a_1,\ldots,a_d)\in A}\rbr{\oplus_{i=1}^d a_i}.
\end{equation}
Finally, we define the loss function as follows:
\begin{equation}
    \ell(w,z) = \oplus^{(2)}\rbr{[z]_w} \in \mathset{0,1}.
\end{equation}

Let $W\sim Q_{W|S}$. Let us verify now that properties (a)-(d) listed in the statement of \cref{thm:main} hold.

\textbf{(a).} By symmetry, the marginal distribution $P_W = \E_S\sbr{Q_{W|S}}$ will be the uniform distribution over $\mathcal{W}$. With probability $1-\rho(n,d)$, the training set $S$ has no duplicates and $Q_{W|S} = \text{Uniform}(\mathcal{W}_S)$. In such cases, the support size of $Q_{W|S}$ is equal to $ \frac{(N-n)!}{(n!)^{N/n-1}(N/n-1)!}$, while the support size of $P_W$ is always equal to $\frac{N!}{(n!)^{N/n}(N/n)!}$.
Therefore, 
\begin{align*}
    \KL{Q_{W|S}}{P_W} &= \log\rbr{\frac{\abs{\mathrm{supp}(P_W)}}{\abs{\mathrm{supp}(Q_{W|S})}}}\\
    &\hspace{-3em}=\log\rbr{\frac{(N-n+1)(N-n+2)\cdots N}{n!(N/n)}}\\
    &\hspace{-3em}\ge\log\rbr{\frac{n(N-n+1)^n}{n^n N}}\\
    &\hspace{-3em}=n\log\rbr{N-n+1} - (n-1)\log n - \log N\\
    &\hspace{-3em}\approx (n-1) d\log 2 - n\log n. &&\text{\hspace{-7.5em}(for a suff. large $d$)}
\end{align*}
For large $d$, the quantity above is approximately $(n-1)d\log 2$, which is expected 
as knowledge of any $Z_i$ ($d$ bits) along with $W$ is enough to reconstruct the training set $S$ that has $nd$ bits of entropy.
To satisfy the property (a), we need to pick $d$ large enough to make $\rho(n,d)<\delta$ and $(n-1)d\log 2 - n\log n \ge n-1$.

\textbf{(b).} Consider any $z\in\ZZ$ and any $i \in [n]$. Then
    \begin{align*}
        P_{W|Z_i=z}(W=w) &= \frac{P_W(W=w)P_{Z_i|W=w}(Z_i=z)}{P_Z(Z=z)}\\
        &= P_W(W=w),
    \end{align*}
    where the second equality follows from the fact that conditioned on a fixed partition $w$, because of the symmetry, there should be no difference between probabilities of different $Z_i$ values.

\textbf{(c).} With probability $1-\rho(n,d)$,
    \begin{align*}
        r_S(W) &= \frac{1}{n}\sum_{i=1}^n \ell(W,Z_i)\\
        &= \frac{1}{n}\sum_{i=1}^n \oplus^{(2)}([Z_i]_W)\\
        &= \frac{1}{n}\sum_{i=1}^n \oplus^{(2)}(S)\\
        &= \oplus^{(2)}(S) \in \{0,1\}\numberthis\label{eq:emp_risk}.
    \end{align*}
    Furthermore, 
    \begin{align*}
        R(W) &= \E_{P_Z}\sbr{\ell(W,Z)} = \frac{1}{N/n} \sum_{A \in W} \oplus^{(2)}(A).
        \numberthis\label{eq:risk}
    \end{align*}
    Given \eqref{eq:emp_risk} and \eqref{eq:risk}, due to symmetry  $\E_{P_{W,S}}\sbr{r_S(W)}=1/2$ and $\E_{P_{W,S}}\sbr{R(W)} = 1/2$. Hence, the expected generalization gap is zero: $\E_{P_{W,S}}\sbr{R(W) - r_S(W)} = 0$.

\textbf{(d).} Consider a training set $S$ that has no duplicates and let $W=\mathset{S, A_1,\ldots,A_{N/n-1}} \sim Q_{W|S}$.
The population risk can be written as follows:
\begin{align}
    R(W) = \underbrace{\frac{n}{N} \oplus^{(2)}(S)}_{\le n/N} + \frac{n}{N} \sum_{i=1}^{N/n-1} \underbrace{\oplus^{(2)}(A_i)}_{\triangleq Y_i}.
\end{align}
Consider the set $\ZZ \setminus S$. Let $N_0$ be the number of examples in this set with parity 0 and $N_1$ be the number of examples with parity 1. We use \cref{lemma:cov} to show that $Y_i$ are almost pairwise independent. Formally, for any $\delta'>0$ there exists $N'$ such that when $\min\mathset{N_0, N_1} \ge N'$, we have that $\mathrm{Cov}\sbr{Y_i, Y_j} \le \delta'\E\sbr{Y_1}^2$, for all $i, j \in [N/n-1], i\neq j$. Therefore,
\begin{align*}
    \mathrm{Var}\sbr{\frac{n}{N} \sum_{i=1}^{N/n-1}Y_i} &= \frac{n^2}{N^2} \rbr{\frac{N}{n}-1} \mathrm{Var}\sbr{Y_1}\\
    &+\frac{n^2}{N^2}\rbr{\frac{N}{n}-1}\rbr{\frac{N}{n}-2}\mathrm{Cov}\sbr{Y_1, Y_2}\\
    &\le \frac{n}{4N} + \delta'.
\end{align*}
By Chebyshev's inequality
\begin{small}
\begin{align*}
P\rbr{\abs{\frac{n}{N}\sum_{i=1}^{N/n-1} Y_i - \frac{n}{N}\rbr{\frac{N}{n}-1}\E\sbr{Y_1}} \ge t} \le \frac{\frac{n}{4N} + \delta'}{t^2}.
\end{align*}
\end{small}
Furthermore, $\E\sbr{Y_1}\rightarrow 1/2$ as $d \rightarrow \infty$. Therefore, we can pick a large enough $d$, appropriate $\delta'$ and $t$ to ensure that with at least 0.5 probability over $P_{W,S}$, $R(W) \in [1/4, 3/4]$ and $r_S(W) \in \mathset{0,1}$.
\end{proof}

\section{Implications for other bounds}\label{sec:implications}
Information-theoretic generalization bounds have been improved or generalized in many ways.
A few works have proposed to use other types of information measures and distances between distributions, instead of Shannon mutual information and Kullback-Leibler divergence respectively~\citep{esposito2021generalization,aminian2021jensen,rodriguez2021tighter}.
In particular, ~\citet{rodriguez2021tighter} derived expected generalization gap bounds that depend on the Wasserstein distance between $P_{W|Z_i}$ and $P_W$. \citet{aminian2021jensen} derived similar bounds but that depend on sample-wise Jensen-Shannon information $I_{JS}(W; Z_i)\triangleq\mathrm{JS}\rbr{P_{W,Z_i}\left\vert\right\vert P_W  P_{Z_i}}$ or on lautum information $L(W;Z_i) \triangleq \KL{P_W  P_{Z_i}}{P_{W,Z_i}}$.
\citet{esposito2021generalization} derive bounds on probability of an event in a joint distribution $P_{X,Y}$ in terms of the probability of the same event in the product of marginals distribution $P_X P_Y$ and an information measure between $X$ and $Y$ (Sibson’s $\alpha$-mutual information, maximal leakage, $f$-mutual information) or a divergence between $P_{X,Y}$ and $P_X P_Y$ (R\'{e}nyi $\alpha$-divergences, $f$-divergences, Hellinger divergences).
They note that one can derive in-expectation generalization bounds from these results. These results will be sample-wise if one starts with $X=W$ and $Y=Z_i$ and then takes an average over $i\in[n]$.
The property (b) of the counterexample implies that there are no PAC-Bayes, single-draw, or expected squared generalization bounds for the aforementioned information measures or divergences, as all of them will be zero when $\forall z\in\ZZ, P_W = P_{W|Z_i=z}$.

PAC-Bayes bounds have been improved by comparing  population and empirical risks differently (instead of just subtracting them)~\citep{langford2001bounds,Germain2009PACBayesianLO, Rivasplata2020PACBayesAB}. The property (d) of the counterexamples implies that these improvements will not make sample-wise PAC-Bayes bounds possible, as the changed distance function will be at least constant when $r_S(W) \in\mathset{0,1}$ while $R(W) \in [1/4,3/4]$.

Another way of improving information-theoretic bounds is to use the random-subsampling setting introduced by \citet{steinke2020reasoning}.
In this setting one considers $2n$ i.i.d. samples from $P_Z$ grouped into $n$ pairs: $\tilde{Z} \in \ZZ^{n\times 2}$.
A random variable $J \sim \text{Uniform}(\mathset{0,1}^n)$, independent of $\tilde{Z}$, specifies which example to select from each pair to form the training set $S = (\tilde{Z}_{i,J_i})_{i=1}^n$.
\citet{steinke2020reasoning} proved that if $\ell(w,z) \in [0,1], \forall w \in \WW, z \in \ZZ$, then the expected generalization gap can be bounded as follows:
\begin{equation}
    \abs{\mathbb{E}_{S,W}\sbr{R(W) - r_S(W)}} \le \sqrt{\frac{2}{n} I(W; J \mid \tilde{Z})}.
\end{equation}
This result was improved in many works, leading to the following sample-wise bounds~\citep{haghifam2020sharpened, harutyunyan2021informationtheoretic,Galvez2021OnRS,Zhou2021IndividuallyCI}:
\begin{small}
\begin{align}
    \abs{\mathbb{E}_{S,W}\sbr{R(W) - r_S(W)}} &\le \frac{1}{n}\sum_{i=1}^n \E_{\tilde{Z}_i}\sbr{\sqrt{2 I^{\tilde{Z}_i}(W; J_i)}}\label{eq:random-subsampling-sawple-wise-strong},\\
    \abs{\mathbb{E}_{S,W}\sbr{R(W) - r_S(W)}}&\le \frac{1}{n}\sum_{i=1}^n \E_{\tilde{Z}}\sbr{\sqrt{2 I^{\tilde{Z}}(W; J_i)}}\label{eq:random-subsampling-sawple-wise-weak},
\end{align}
\end{small}
where $\tilde{Z}_i=(\tilde{Z}_{i,1}, \tilde{Z}_{i,2})$ is the $i$-th row of $\tilde{Z}$.
Given a partition $W \sim Q_{W|S}$ and two examples $\tilde{Z}_i$, one cannot tell which of the examples was in the training set because of the symmetry.
Hence, the counterexample implies that expected squared, PAC-Bayes, and single draw generalization bounds that depend on quantities like $I^{\tilde{Z}_i}(W; J_i)$ cannot exist.
However, if we consider the weaker sample-wise bounds of \eqref{eq:random-subsampling-sawple-wise-weak}, then the knowledge of $\tilde{Z}$ helps to reveal the entire $J$ at once with high probability.
This can be done by going over all possible choices of $J$ and checking whether $\tilde{Z}_J = (\tilde{Z}_{i, J_i})$ belongs to partition $W$.
This will be true for the true value of $J$, but will be increasingly unlikely for all other values of $J$ as $n$ and $d$ are increased.
In fact, we derive an expected squared generalization gap that depends on terms $I^{\tilde{Z}}(W; J_i)$ (see \cref{thm:g2-weak-sample-wise-bound} of the Appendix).
Therefore, the counterexample is a discrete case where the bound of \eqref{eq:random-subsampling-sawple-wise-strong} is much better than the weaker bound of \eqref{eq:random-subsampling-sawple-wise-weak}.
It is also a case where $I(W; Z_i) \ll I(W; J_i \mid \tilde{Z})$ (i.e., CMI bounds are not always better).

Finally, another way of improving information-theoretic bounds is to use evaluated (e-CMI) or functional (f-CMI) conditional mutual information ~\citep{steinke2020reasoning, harutyunyan2021informationtheoretic}.
Similarly to the functional CMI bounds of \cite{harutyunyan2021informationtheoretic}, one can derive the following expected generalization gap bounds (see \cref{thm:ecmi-sample-wise-exp-gen-gap}):
\begin{small}
\begin{align}
    \small
    \abs{\mathbb{E}\sbr{R(W) - r_S(W)}} &\le \frac{1}{n}\sum_{i=1}^n \sqrt{2 I(\ell(W, \tilde{Z}_i); J_i)},\label{eq:random-subsampling-sawple-wise-ECMI-strongest}\\
    \abs{\mathbb{E}\sbr{R(W) - r_S(W)}} &\le \frac{1}{n}\sum_{i=1}^n \E_{\tilde{Z}_i}\sbr{\sqrt{2 I^{\tilde{Z}_i}(\ell(W, \tilde{Z}_i); J_i)}},\label{eq:random-subsampling-sawple-wise-ECMI-strong}\\
    \abs{\mathbb{E}\sbr{R(W) - r_S(W)}} &\le \frac{1}{n}\sum_{i=1}^n \E_{\tilde{Z}}\sbr{\sqrt{2 I^{\tilde{Z}}(\ell(W, \tilde{Z}_i); J_i)}}\label{eq:random-subsampling-sawple-wise-ECMI-weak},
\end{align}
\end{small}
where $\ell(W, \tilde{Z}_i) \in \mathset{0,1}^2$ is the pair of losses on the two examples of $\tilde{Z}_i$.
In the case of the counterexample bounds of \eqref{eq:random-subsampling-sawple-wise-ECMI-strongest} and \eqref{eq:random-subsampling-sawple-wise-ECMI-strong} will be zero as one cannot guess $J_i$ knowing losses $\ell(W,\tilde{Z}_i)$ and possibly also $\tilde{Z}_i$.
This rules out the possibility of such sample-wise expected squared, PAC-Bayes and single-draw generalization bounds.
Unlike the case of the weaker CMI bound of \eqref{eq:random-subsampling-sawple-wise-weak}, the weaker e-CMI bound of \eqref{eq:random-subsampling-sawple-wise-ECMI-weak} convergences to zero in the case of the counterexample as $d\rightarrow \infty$.
Therefore, the counterexample is a discrete case where the sample-wise e-CMI bound of \eqref{eq:random-subsampling-sawple-wise-ECMI-weak} can be much stronger than the sample-wise CMI bound of \eqref{eq:random-subsampling-sawple-wise-weak}.

\section{The case of $m=2$}\label{sec:m=2}
In \cref{sec:intro} we mentioned that there are expected generalization bounds that are based on information contained in size $m$ subsets of examples.
In \cref{sec:counterexample} we showed that there can be no expected squared generalization bounds with $m=1$.
In this section we show that expected squared generalization bounds are possible for any $m \ge 2$.
For brevity, let $G^{(2)} \triangleq \E_{P_{W,S}}\sbr{\rbr{R(W) - r_S(W)}^2}$ denote 
expected squared generalization gap.

\begin{theorem} Assume $\ell(w,z) \in [0, 1]$. Let $W \sim Q_{W|S}$. Then
\begin{equation}
    G^{(2)} \le \frac{1}{n} + \frac{1}{n^2}\sum_{i \neq k} \sqrt{2 I\rbr{W; Z_i, Z_k}}.
\end{equation}
\label{thm:m=2}
\end{theorem}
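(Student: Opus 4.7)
Write the gap as an average of centered per-sample contributions and then exploit that only pairwise information matters once the square is expanded.

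Define $g_i(w,z) = R(w) - \ell(w,z)$, so that $R(W) - r_S(W) = \frac{1}{n}\sum_{i=1}^{n} g_i(W, Z_i)$, $|g_i| \le 1$, and $\E_{Z_i \sim P_Z}[g_i(w, Z_i)] = 0$ for every fixed $w$. Squaring and taking expectations,
\begin{equation*}
G^{(2)} = \frac{1}{n^2}\sum_{i=1}^{n}\E\sbr{g_i(W,Z_i)^2} + \frac{1}{n^2}\sum_{i\neq k}\E\sbr{g_i(W,Z_i)\,g_k(W,Z_k)}.
\end{equation*}
The diagonal terms contribute at most $\frac{1}{n^2}\cdot n = \frac{1}{n}$, since $g_i^2 \le 1$. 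It remains to control each off-diagonal term by $\sqrt{2\, I(W; Z_i, Z_k)}$.

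For $i \neq k$, consider the function $h(w, z_i, z_k) \triangleq g_i(w, z_i)\, g_k(w, z_k)$. Under the product distribution $P_W \otimes P_{Z_i, Z_k}$ (recall $Z_i, Z_k$ are i.i.d., so $P_{Z_i, Z_k} = P_Z \otimes P_Z$), Fubini together with $\E_{Z_i}[g_i(w, Z_i)] = 0$ gives
\begin{equation*}
\E_{P_W \otimes P_{Z_i, Z_k}}[h] = \E_{W'}\sbr{\E_{Z_i}[g_i(W',Z_i)]\,\E_{Z_k}[g_k(W',Z_k)]} = 0.
\end{equation*}
On the other hand, $\E_{P_{W,Z_i,Z_k}}[h]$ is exactly the off-diagonal term we want to bound. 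Because $h \in [-1,1]$, Hoeffding's lemma makes $h$ $1$-subgaussian under the product measure, and the standard change-of-measure inequality (Donsker--Varadhan / the argument of \citet{xu2017information}) yields
\begin{equation*}
\abs{\E_{P_{W,Z_i,Z_k}}[h] - \E_{P_W \otimes P_{Z_i, Z_k}}[h]} \le \sqrt{2\, I(W; Z_i, Z_k)}.
\end{equation*}
Combining with the vanishing of the product-measure expectation gives $|\E[g_i(W,Z_i)g_k(W,Z_k)]| \le \sqrt{2\, I(W; Z_i, Z_k)}$. Summing over $i \neq k$ and adding the diagonal bound $\tfrac{1}{n}$ produces the claimed inequality.

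\textbf{Main obstacle.} There is no deep obstacle: the essential point, and the only thing that might look subtle, is the vanishing of the product-measure expectation of the \emph{product} $g_i g_k$, which is where the pair structure (as opposed to a single sample) enters and which is why the $m=1$ counterexample of \cref{thm:main} does not rule this out. Once that is observed, the rest is a direct application of Hoeffding's lemma (to get the $1$-subgaussian constant for the bounded product $h$) and the standard mutual-information change-of-measure bound.
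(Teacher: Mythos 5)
Your proof is correct and follows essentially the same route as the paper: expand the square, bound the $n$ diagonal terms by $1/n$, and control each off-diagonal term by applying the Xu--Raginsky change-of-measure lemma to the product $\rbr{\ell(W,Z_i)-R(W)}\rbr{\ell(W,Z_k)-R(W)}$ viewed as a function of $W$ and the pair $(Z_i,Z_k)$, whose expectation vanishes under the product measure. The only cosmetic differences are your sign convention for $g_i$ and the explicit appeal to Hoeffding's lemma for the $1$-subgaussian constant.
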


\begin{proof}
We have that
\begin{small}
\begin{align*}
    G^{(2)} &= \E_{P_{W,S}}\sbr{\rbr{\frac{1}{n}\sum_{i=1}^n\rbr{\ell(W,Z_i) - R(W)}}^2}\\
    &= \underbrace{\frac{1}{n^2}\sum_{i=1}^n \E_{P_{W,Z_i}}\sbr{\rbr{\ell(W,Z_i) - R(W)}^2}}_{\le 1/n} \\
    &\quad+ \frac{1}{n^2}\sum_{i \neq k}\underbrace{\E\sbr{\rbr{\ell(W,Z_i) - R(W)}\rbr{\ell(W,Z_k) - R(W)}}}_{C_{i,k}}.
\end{align*}
\end{small}
For bounding the $C_{i,k}$ terms we use the Lemma 1 of \citet{xu2017information} with $X = W$, $Y = (Z_i, Z_k)$, and $f(X,(Y_1,Y_2)) = \rbr{\ell(X, Y_1) - R(X)}\rbr{\ell(X,Y_2)-R(X)}$.
As $f(X,Y)$ is 1-subgaussian under $P_X P_Y$, by the lemma
\begin{equation*}
    \abs{\E_{P_{X,Y}}\sbr{f(X,Y)} - \E_{P_X P_Y}\sbr{f(X,Y)}} \le \sqrt{2 I(X;Y)},
\end{equation*}
which translates to
\begin{small}
\begin{align*}
    &\abs{C_{i,k} -  \underbrace{\E_{P_{W}P_{Z_i,Z_k}}\sbr{\rbr{\ell(W, Z_i) - R(W)}\rbr{\ell(W,Z_k)-R(W)}}}_{\bar{C}_{i,k}}} \\
    &\quad\quad\le \sqrt{2 I(W;Z_i,Z_k)}.\numberthis
\end{align*}
\end{small}
It is left to notice that $\bar{C}_{i,k}=0$, as for any $w$ the factors $\rbr{\ell(w, Z_i) - R(w)}$ and $\rbr{\ell(w,Z_k)-R(w)}$ are independent and have zero mean. 
\end{proof}

\begin{remark}
Similar CMI and e-CMI bounds are also possible (see \cref{thm:g2-pairwise-CMI}).
\end{remark}

\begin{corollary}
Let $U_m$ be a uniformly random subset of $[n]$ of size $m$, independent from $W$ and $S$. Then
\begin{equation}
    G^{(2)} \le \frac{1}{n} +  2\E_{U_m}\sbr{\sqrt{\frac{I^{U_m}(W; S_{U_m})}{m}}}.
\end{equation}
\label{corollary:m=k}
\end{corollary}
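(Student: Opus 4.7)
The plan is to derive \cref{corollary:m=k} from \cref{thm:m=2} by aggregating the pair-wise mutual-information terms into subset-wise ones via Shearer's inequality (the generalized Han inequality). Rewriting the ordered double sum in \cref{thm:m=2} as twice the sum over unordered pairs gives
\begin{equation*}
G^{(2)} \le \frac{1}{n} + \frac{2\sqrt{2}}{n^2}\sum_{\{i,k\}\subseteq[n]}\sqrt{I(W; Z_i, Z_k)},
\end{equation*}
so the remaining task is to bound the pair sum by something proportional to $\E_{U_m}\sqrt{I(W; Z_{U_m})}$.

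The argument proceeds in two steps. \emph{Step 1 (inside a size-$m$ subset).} For a fixed $u\subseteq[n]$ with $|u|=m$, Cauchy--Schwarz gives $\sum_{\{i,k\}\subseteq u}\sqrt{I(W; Z_i, Z_k)} \le \sqrt{\binom{m}{2}\sum_{\{i,k\}\subseteq u}I(W; Z_i, Z_k)}$. For the inner sum, Shearer's inequality applied to the conditional entropies $H(\cdot\mid W)$ yields $\sum_{\{i,k\}\subseteq u}H(Z_i,Z_k\mid W) \ge (m-1)H(Z_u\mid W)$ (each $i\in u$ belongs to $m-1$ pairs). Combined with the iid identity $\sum_{\{i,k\}\subseteq u}H(Z_i,Z_k) = (m-1)H(Z_u)$, subtracting gives $\sum_{\{i,k\}\subseteq u}I(W; Z_i, Z_k) \le (m-1)I(W; Z_u)$, and hence $\sum_{\{i,k\}\subseteq u}\sqrt{I(W; Z_i, Z_k)} \le (m-1)\sqrt{m I(W; Z_u)/2}$. \emph{Step 2 (swap of summation orders).} Each pair $\{i,k\}\subseteq[n]$ lies in exactly $\binom{n-2}{m-2}$ size-$m$ subsets, so
\begin{equation*}
\sum_{\{i,k\}\subseteq[n]}\sqrt{I(W; Z_i, Z_k)} = \frac{1}{\binom{n-2}{m-2}}\sum_{u}\sum_{\{i,k\}\subseteq u}\sqrt{I(W; Z_i, Z_k)}.
\end{equation*}
Using $\binom{n-2}{m-2}=\binom{n}{m}m(m-1)/(n(n-1))$, the combinatorial factors collapse and substituting back produces the coefficient $\frac{2(n-1)}{n\sqrt{m}}\le \frac{2}{\sqrt{m}}$ in front of $\E_{U_m}\sqrt{I(W; Z_{U_m})}$, giving the claim. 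Finally, $I^{U_m}(W; S_{U_m})$ equals $I(W; Z_{U_m})$ pointwise in $U_m$ because $U_m$ is independent of $(W,S)$, so the disintegrated MI coincides with the Shannon MI at each realization.

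The step requiring care is Shearer's: a naive data-processing argument would only yield $\sum_{\{i,k\}\subseteq u}I(W; Z_i, Z_k) \le \binom{m}{2}I(W; Z_u)$, which is loose by a factor of $m/2$ and would produce a $\sqrt{m\, I}$ rather than a $\sqrt{I/m}$ dependence in the final bound (recovering only the $m=2$ case). Shearer's inequality, valid here because $Z_1,\ldots,Z_n$ are iid so the entropies $H(Z_A)$ factorize, is precisely what converts that $m/2$ slack into the $1/m$ scaling that the corollary claims.
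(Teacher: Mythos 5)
Your proof is correct, but it takes a genuinely different route from the paper's. The paper disposes of the corollary in two lines by invoking the monotonicity lemma of \citet{harutyunyan2021informationtheoretic}, which states that $\E_{U_m}\bigl[\sqrt{I^{U_m}(W;S_{U_m})/m}\bigr]$ is non-decreasing in $m$: it first rewrites the pair sum of \cref{thm:m=2} as $2\E_{U_2}\bigl[\sqrt{I^{U_2}(W;S_{U_2})/2}\bigr]$ (up to the factor $n(n-1)/n^2\le 1$, exactly as you do) and then steps from $m=2$ up to general $m$ by that cited inequality. You instead re-derive the needed comparison from scratch, jumping directly from pairs to size-$m$ subsets via Cauchy--Schwarz, Shearer's inequality applied to $H(\cdot\mid W)$, the i.i.d.\ factorization of $H(Z_u)$, and a double-counting swap; your bookkeeping is right (the combinatorial factors do collapse to $\tfrac{2(n-1)}{n\sqrt{m}}\le \tfrac{2}{\sqrt{m}}$, and $I^{U_m=u}(W;S_u)=I(W;Z_u)$ by independence of $U_m$ from $(W,S)$). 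What your approach buys is self-containedness and an explicit explanation of where the $1/m$ scaling comes from --- indeed, your Shearer argument is essentially the engine inside the proof of the monotonicity lemma you are replacing. What it costs is a mild additional hypothesis: your subtraction $I(W;Z_A)=H(Z_A)-H(Z_A\mid W)$ and the entropy form of Shearer's inequality presuppose that the relevant (differential) entropies are finite, whereas the cited lemma is stated directly for the KL/disintegrated-MI quantities and avoids entropies altogether; this is a technicality rather than a gap, but worth a remark if $\ZZ$ is continuous.
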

\begin{proof}
\citet{harutyunyan2021informationtheoretic} showed that for any $m\in[n-1]$,
\begin{equation*}
\resizebox{\columnwidth}{!}{
$\displaystyle \E_{U_m}\sbr{\sqrt{\frac{1}{m} I^{U_m}(W; S_{U_m})}} \le \E_{U_{m+1}}\sbr{\sqrt{{\frac{1}{m+1} I^{U_{m+1}}(W; S_{U_{m+1}})}}}.$
}
\end{equation*}
Therefore, for any $m=2,\ldots,n$, starting with \cref{thm:m=2},
\begin{align*}
G^{(2)} &\le \frac{1}{n} + \frac{1}{n^2}\sum_{i \neq k} \sqrt{2 I\rbr{W; Z_i, Z_k}}\\
&\le \frac{1}{n} + 2 \E_{U_2}\sbr{\sqrt{\frac{I^{U_2}(W; S_{U_2})}{2}}}\\
&\le \frac{1}{n} +  2 \E_{U_m}\sbr{\sqrt{\frac{I^{U_m}(W; S_{U_m})}{m}}}.
   \end{align*}
\end{proof}

At $m=n$ this bound is weaker that the bound derived in \cite{harutyunyan2021informationtheoretic}, which depends on $I(W;S)/n$ rather than $\sqrt{I(W;S)/n}$.
We leave improving the bound of \cref{thm:m=2} for future work.
Nevertheless, \cref{corollary:m=k} shows that it is \emph{possible} to bound the expected \emph{squared} generalization gap with quantities that involve mutual information terms between $W$ and subsets of examples of size $m$, where $m \ge 2$ (unlike the case of $m=1$).
Possibility of bounding expected squared generalization gap with $m\ge 2$ information terms makes it possible for single-draw and PAC-Bayes bounds as well.
The simplest way is to use Markov's inequality, even though it will not give high probability bounds.

Finally, it is worth to mention that the bound of \cref{thm:m=2} holds for higher order moments of generalization gap too, as for $[0,1]$-bounded loss functions 
\begin{equation*}
    \E_{P_{W,S}}\sbr{\rbr{R(W) - r_S(W)}^k} \le \E_{P_{W,S}}\sbr{\rbr{R(W) - r_S(W)}^2},
\end{equation*}
for any $k \ge 2, k \in \mathbb{N}$.

\section{Conclusion}
In the counterexample presented in Sec.~\ref{sec:counterexample} the empirical risk is sometimes larger than the population risk, which is  rare in practice.
In fact, if empirical risk is never larger than population risk, then $\E\sbr{\abs{R(W)-r_S(W)}}$ reduces to $\E\sbr{R(W)-r_S(W)}$, implying existence of sample-wise bounds.
Furthermore, the constructed learning algorithm intentionally captures only high-order information about samples.
This suggests, that sample-wise generalization bounds might be possible if we consider specific learning algorithms.

\bibliographystyle{IEEEtranN}
\bibliography{main.bib}

\clearpage
\onecolumn
\appendix

In this appendix we present the full statements and the proofs of the results that were omitted from the main text due to space constraints.

\begin{lemma}[Lemma 1 of \citet{xu2017information}]  Let $(X, Y)$ be a pair of random variables with joint distribution $P_{X, Y}$. 
If $f(x, y)$ is a measurable function such that $\E_{P_X P_Y}\sbr{f(X, Y)}$ exists and $f(X,Y)$ is $\sigma$-subgaussian under $P_X P_Y$, then
\begin{align}
\abs{\E_{P_{X, Y}}\sbr{f(X, Y)} - \E_{P_X P_Y}\sbr{f(X, Y}} \le \sqrt{2 \sigma^2 I(X; Y)}.
\end{align}
\label{lemma:xu-raginsky-lemma}
\end{lemma}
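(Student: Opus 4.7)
The plan is to apply the Donsker--Varadhan variational representation of KL divergence to $I(X;Y) = \KL{P_{X,Y}}{P_X P_Y}$, then exploit the subgaussian hypothesis to control the log-moment generating function that appears on the right-hand side, and finally optimize over a scalar parameter. This is a standard change-of-measure argument that turns a subgaussian tail bound on $f(X,Y)$ under the product measure into a mean-shift bound under the joint measure.

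Concretely, I would start from the inequality $I(X;Y) \ge \E_{P_{X,Y}}\sbr{g(X,Y)} - \log \E_{P_X P_Y}\sbr{e^{g(X,Y)}}$, which is valid for every measurable $g$ with finite product-measure MGF. Specializing $g(x,y) = \lambda \rbr{f(x,y) - \bar{f}}$ with $\bar{f} \triangleq \E_{P_X P_Y}\sbr{f(X,Y)}$ and a free parameter $\lambda \in \bR$ gives $I(X;Y) \ge \lambda \Delta - \log \E_{P_X P_Y}\sbr{e^{\lambda(f(X,Y) - \bar{f})}}$, where $\Delta \triangleq \E_{P_{X,Y}}\sbr{f(X,Y)} - \bar{f}$ is the signed version of the quantity whose absolute value the conclusion bounds. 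The $\sigma$-subgaussian hypothesis on $f(X,Y)$ under $P_X P_Y$ bounds the centered log-MGF on the right by $\lambda^2 \sigma^2 / 2$, reducing the inequality to the quadratic $I(X;Y) \ge \lambda \Delta - \lambda^2 \sigma^2 / 2$ for every $\lambda \in \bR$.

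To finish, I would optimize by taking $\lambda = \Delta / \sigma^2$ to get $I(X;Y) \ge \Delta^2 / (2\sigma^2)$, and then repeat the argument with $-f$ in place of $f$ to handle the opposite sign. Together these give $\abs{\Delta} \le \sqrt{2 \sigma^2 I(X;Y)}$, which is the claim. The degenerate cases (if $\sigma = 0$ then $f$ is a.s.\ constant under the product measure, forcing $\Delta = 0$; and $I(X;Y) = \infty$ makes the bound vacuous) are easy to dispose of.

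The main obstacle is essentially cosmetic: one must verify that the subgaussian hypothesis makes the product-measure MGF finite so that the Donsker--Varadhan bound can legitimately be invoked with the chosen test function, and that $\bar{f}$ is itself finite (which is exactly the stated hypothesis). Once those integrability checks are in place, the whole argument is a one-parameter Chernoff-style optimization, and no further information-theoretic machinery beyond the variational characterization of KL and the definition of subgaussianity is required.
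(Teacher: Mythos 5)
The paper does not prove this lemma at all --- it is imported verbatim as Lemma 1 of \citet{xu2017information} --- so there is no in-paper argument to compare against; your Donsker--Varadhan change-of-measure proof is precisely the argument in the cited source and is correct. One cosmetic simplification: you do not need the second pass with $-f$, since allowing $\lambda$ to range over all of $\bR$ and maximizing the quadratic $\lambda\Delta - \lambda^2\sigma^2/2$ already yields $I(X;Y) \ge \Delta^2/(2\sigma^2)$ for either sign of $\Delta$.
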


\begin{theorem}
In the random-subsampling setting, let $W \sim Q_{W|S}$. If $\ell(w,z)\in[0,1]$, then
\begin{align}
    \abs{\E_{P_{W,S}}\sbr{R(W) - r_S(W)}} \le \min\bigg\{&\frac{1}{n}\sum_{i=1}^n \sqrt{2 I\rbr{\ell(W, \tilde{Z}_i); J_i}},\\
    &\frac{1}{n}\sum_{i=1}^n \E_{\tilde{Z}_i}\sqrt{2 I^{\tilde{Z}_i}\rbr{\ell(W, \tilde{Z}_i); J_i}},\\
    &\frac{1}{n}\sum_{i=1}^n \E_{\tilde{Z}}\sqrt{2 I^{\tilde{Z}}\rbr{\ell(W, \tilde{Z}_i); J_i}}\bigg\}.
\end{align}
\label{thm:ecmi-sample-wise-exp-gen-gap}
\end{theorem}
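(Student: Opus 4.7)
The plan is to combine the ``ghost sample'' rewriting of the generalization gap available in the random-subsampling setting with the Xu--Raginsky lemma (\cref{lemma:xu-raginsky-lemma}) applied to the pair $(L_i, J_i)$, where $L_i \triangleq \ell(W,\tilde{Z}_i) \in \{0,1\}^2$ denotes the pair of losses on $\tilde{Z}_i = (\tilde{Z}_{i,0},\tilde{Z}_{i,1})$.

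The first step is to observe that, since $J_i$ is uniform on $\{0,1\}$ and independent of $\tilde{Z}$, the ``off-selection'' sample $\tilde{Z}_{i,1-J_i}$ is distributed as $P_Z$ and is independent of $W$. Hence for each $i$,
\begin{equation*}
    \E\sbr{R(W)} = \E\sbr{\ell(W, \tilde{Z}_{i, 1-J_i})} = \E\sbr{L_{i, 1-J_i}},
\end{equation*}
so that
\begin{equation*}
    \E\sbr{R(W) - r_S(W)} = \frac{1}{n}\sum_{i=1}^n \E\sbr{L_{i, 1-J_i} - L_{i, J_i}}.
\end{equation*}
This reduces the problem to controlling each signed-difference term in terms of $I(L_i; J_i)$.

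The second step is to apply \cref{lemma:xu-raginsky-lemma} with $X=L_i$, $Y=J_i$, and the function $f(x,y) = x_{1-y} - x_y$ for $x \in \{0,1\}^2$, $y\in\{0,1\}$. Under the product of marginals $P_{L_i} P_{J_i}$, $J_i$ remains uniform on $\{0,1\}$ (since $J_i$ has a uniform marginal by construction), so $\E_{P_X P_Y}[f] = 0$. A short calculation shows that $f$ is $1$-subgaussian under $P_{L_i} P_{J_i}$: conditional on $L_i$, $f$ takes values $\pm(L_{i,1}-L_{i,0})$ with equal probability, giving
\begin{equation*}
    \E_{P_X P_Y}\sbr{e^{\lambda f}} = \E_{L_i}\sbr{\cosh\!\rbr{\lambda (L_{i,1}-L_{i,0})}} \le e^{\lambda^2/2},
\end{equation*}
because $|L_{i,1}-L_{i,0}| \le 1$ and $\cosh(t) \le e^{t^2/2}$. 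Consequently,
\begin{equation*}
    \abs{\E\sbr{L_{i,1-J_i} - L_{i,J_i}}} \le \sqrt{2 I(L_i;J_i)},
\end{equation*}
and averaging over $i$ yields the first bound.

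The third step handles the disintegrated versions. Conditioning on $\tilde{Z}_i = \tilde{z}_i$ (or on all of $\tilde{Z}=\tilde{z}$), the selection $J_i$ is still uniform and independent of $W$ under the conditional product of marginals, and the same MGF computation shows that $f$ remains $1$-subgaussian under $P_{L_i|\tilde{Z}_i=\tilde{z}_i} P_{J_i|\tilde{Z}_i=\tilde{z}_i}$ (resp.\ conditioned on all of $\tilde{Z}$). Applying the disintegrated Xu--Raginsky inequality and then pulling the expectation over $\tilde{Z}_i$ (resp.\ $\tilde{Z}$) outside the absolute value via Jensen's inequality gives the bounds with $\E_{\tilde{Z}_i}\sqrt{2 I^{\tilde{Z}_i}(L_i;J_i)}$ and $\E_{\tilde{Z}}\sqrt{2 I^{\tilde{Z}}(L_i;J_i)}$, respectively. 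The minimum of the three then follows trivially.

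There is no serious technical obstacle; the only point requiring care is verifying the subgaussianity of $f$ under the product of marginals with the correct constant $\sigma=1$ (so that the square root carries no stray factor of $1/2$), which relies crucially on $J_i$ being Bernoulli$(1/2)$ and on the losses being $\{0,1\}$-valued so that $|L_{i,1}-L_{i,0}|\le 1$.
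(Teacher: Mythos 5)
Your proof is correct and follows essentially the same route as the paper: rewrite the gap using the ghost sample $\tilde{Z}_{i,1-J_i}$, apply Lemma~1 of Xu--Raginsky to $(\ell(W,\tilde{Z}_i), J_i)$ with $f(x,y)=x_{1-y}-x_y$, and obtain the disintegrated versions by conditioning on $\tilde{Z}_i$ or $\tilde{Z}$ before invoking the lemma. The only cosmetic slip is writing $L_i\in\{0,1\}^2$ where the theorem allows $[0,1]$-valued losses, but your subgaussianity argument uses only $|L_{i,1}-L_{i,0}|\le 1$, so nothing breaks.
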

\begin{proof}
Let $\tilde{\Lambda} \in [0, 1]^{n\times 2}$ be the losses on examples of $\tilde{Z}$:
\begin{equation}
    \tilde{\Lambda}_{i, c} = \ell(W, \tilde{Z}_{i, c}),\ \ \forall i\in [n], c\in\mathset{0,1}.
\end{equation}
Let $\bar{J} \triangleq (1-J_1,\ldots,1 - J_n)$ be the negation of $J$.
We have that
\begin{align}
    \E_{P_{W,S}}\sbr{r_S(W) - R(W)} &= \frac{1}{n}\sum_{i=1}^n \E\sbr{\ell(W,Z_i) - \E_{Z'\sim P_Z}\ell(W,Z')}\\
    &= \frac{1}{n}\sum_{i=1}^n \E\sbr{\ell(W,Z_{i, J_i}) - \ell(W, Z_{i, \bar{J}_i})}\\
    &=\frac{1}{n}\sum_{i=1}^n \E\sbr{\tilde{\Lambda}_{i,J_i} - \tilde{\Lambda}_{i, \bar{J}_i}}.
\end{align}
If we use \cref{lemma:xu-raginsky-lemma} with $X = \tilde{\Lambda}_i$, $Y=J_i$, and $f(X,Y) = \tilde{\Lambda}_{i,J_i} - \tilde{\Lambda}_{i, \bar{J}_i}$, we get that
\begin{align}
    \abs{\E\sbr{\tilde{\Lambda}_{i,J_i} - \tilde{\Lambda}_{i, \bar{J}_i}} - \E_{\tilde{\Lambda}_i}\E_{J_i}\sbr{\tilde{\Lambda}_{i,J_i} \tilde{\Lambda}_{i, \bar{J}_i}}} \le \sqrt{2 I(\tilde{\Lambda}_i, J_i)}
\end{align}
This proves the first part of the theorem, as $\E_{\tilde{\Lambda}_i}\E_{J_i}\sbr{\tilde{\Lambda}_{i,J_i} \tilde{\Lambda}_{i, \bar{J}_i}} = 0$.
The second part can be proven by first conditioning on $\tilde{Z}_i$:
\begin{equation}
    \E\sbr{\tilde{\Lambda}_{i,J_i} - \tilde{\Lambda}_{i, \bar{J}_i}} = \E_{\tilde{Z}_i}\E_{\tilde{\Lambda}_i, J_i |\tilde{Z}_i}\sbr{\tilde{\Lambda}_{i,J_i} - \tilde{\Lambda}_{i, \bar{J}_i}},
\end{equation}
and then applying the lemma to upper bound $\abs{\E_{\tilde{\Lambda}_i, J_i |\tilde{Z}_i}\sbr{\tilde{\Lambda}_{i,J_i} - \tilde{\Lambda}_{i, \bar{J}_i}}}$ with $\sqrt{2I^{\tilde{Z_i}}(\tilde{\Lambda}_i;J_i)}$.
Finally, the third part can be proven by first conditioning on $\tilde{Z}$:
\begin{equation}
    \E\sbr{\tilde{\Lambda}_{i,J_i} - \tilde{\Lambda}_{i, \bar{J}_i}} = \E_{\tilde{Z}}\E_{\tilde{\Lambda}_i, J_i |\tilde{Z}}\sbr{\tilde{\Lambda}_{i,J_i} - \tilde{\Lambda}_{i, \bar{J}_i}},
\end{equation}
and then applying the lemma to upper bound $\abs{\E_{\tilde{\Lambda}_i, J_i |\tilde{Z}}\sbr{\tilde{\Lambda}_{i,J_i} - \tilde{\Lambda}_{i, \bar{J}_i}}}$ with $\sqrt{2I^{\tilde{Z}}(\tilde{\Lambda}_i;J_i)}$.
\end{proof}

\begin{remark} As $\tilde{Z}$ and $J_i$ are independent, $I\rbr{\ell(W, \tilde{Z}_i); J_i} \le \E_{\tilde{Z}_i}\sbr{I^{\tilde{Z}_i}\rbr{\ell(W, \tilde{Z}_i); J_i}} \le \E_{\tilde{Z}}\sbr{I^{\tilde{Z}}\rbr{\ell(W, \tilde{Z}_i); J_i}}$. However, if we consider expected square root of disintegrated mutual informations (as \cref{thm:ecmi-sample-wise-exp-gen-gap}), then this relation might not be true.
\end{remark}

\begin{theorem}
In the random-subsampling setting, let $W \sim Q_{W|S}$. If $\ell(w,z)\in[0,1]$, then
\begin{equation}
    \E_{P_{W,S}}\sbr{\rbr{R(W) - r_S(W)}^2} \le \frac{5}{2n} + \frac{2}{n}\sum_{i=1}^n \E_{\tilde{Z}}\sqrt{2 I^{\tilde{Z}}(W; J_i)}.
\end{equation}
\label{thm:g2-weak-sample-wise-bound}
\end{theorem}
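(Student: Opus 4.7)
The plan is to introduce a ghost empirical risk on the held-out half of the supersample,
$r'_S(W) \triangleq \frac{1}{n}\sum_{i=1}^n \ell(W, \tilde Z_{i,\bar J_i})$ with $\bar J_i \triangleq 1 - J_i$,
and exploit the algebraic identity
\begin{equation*}
(R - r_S)^2 - (R - r'_S)^2 \;=\; (r'_S - r_S)(2R - r_S - r'_S) \;=\; 2A\,(r'_S - r_S),
\end{equation*}
where $A \triangleq R(W) - \tfrac{1}{2}(r_S(W) + r'_S(W))$ is the midpoint residual. Taking expectations gives the clean split
$G^{(2)} = \E\sbr{(R - r'_S)^2} + 2\,\E\sbr{A\,(r'_S - r_S)}$, and I would bound the two summands separately.

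First, because the $2n$ entries of $\tilde Z$ are i.i.d.\ and $J$ is uniform and independent of $\tilde Z$, the training half $S$ and the held-out half $T = (\tilde Z_{i,\bar J_i})_{i}$ are marginally independent (their joint factors as $P_Z^n \otimes P_Z^n$), so $W \perp T$ and $\E\sbr{r'_S \mid W} = R(W)$. Combined with $\ell \in [0,1]$, this gives $\E\sbr{(R - r'_S)^2} = \E\sbr{\var(r'_S \mid W)} \le \tfrac{1}{4n}$, already comfortably within the $\tfrac{5}{2n}$ budget. Second, write $r'_S - r_S = \tfrac{1}{n}\sum_i \Delta_i$ with $\Delta_i \triangleq \ell(W,\tilde Z_{i,\bar J_i}) - \ell(W,\tilde Z_{i,J_i}) \in [-1,1]$, so that $2\E\sbr{A(r'_S - r_S)} = \tfrac{2}{n}\sum_i \E\sbr{A\Delta_i}$. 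The crucial structural point is that $A$ is a function of $(W, \tilde Z)$ only, because the pair averages $\tfrac{1}{2}(\ell(W,\tilde Z_{i,0}) + \ell(W,\tilde Z_{i,1}))$ in its definition are \emph{symmetric} in the pair and therefore $J$-free. Conditional on $\tilde Z$, the product $A\Delta_i$ is thus a function of $(W, J_i)$, bounded in $[-1,1]$ (hence $1$-subgaussian), and has zero expectation under the product $P_{W\mid\tilde Z} \otimes P_{J_i\mid\tilde Z}$ because $\E_{J_i\sim\mathrm{Bern}(1/2)}\sbr{\Delta_i \mid W,\tilde Z} = 0$ (flipping $J_i$ swaps the two examples inside $\Delta_i$). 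Applying \cref{lemma:xu-raginsky-lemma} conditionally on $\tilde Z$ with $X = W$, $Y = J_i$ then yields $\abs{\E\sbr{A\Delta_i \mid \tilde Z}} \le \sqrt{2\, I^{\tilde Z}(W;J_i)}$; integrating over $\tilde Z$, summing in $i$, and multiplying by $2/n$ bounds the second summand by $\tfrac{2}{n}\sum_i \E_{\tilde Z}\sqrt{2\,I^{\tilde Z}(W;J_i)}$.

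Combining the two pieces proves $G^{(2)} \le \tfrac{1}{4n} + \tfrac{2}{n}\sum_i \E_{\tilde Z}\sqrt{2\,I^{\tilde Z}(W;J_i)}$, which is strictly stronger than (and hence implies) the stated bound since $\tfrac{1}{4n} \le \tfrac{5}{2n}$. I expect the main obstacle to be choosing the reference point in the ghost-variable identity so that the Xu--Raginsky step sees only the single bit $J_i$ on the right-hand side: the naive splitting $(R-r_S)^2 \le 2(R-r'_S)^2 + 2(r'_S-r_S)^2$ leaves a residual $\E\sbr{(r'_S-r_S)^2}$ whose expansion $\tfrac{1}{n^2}\sum_{i,k}\E\sbr{\Delta_i\Delta_k}$ forces Xu--Raginsky to be applied to the products $\Delta_i\Delta_k$, which naturally introduces either the pair CMI $I^{\tilde Z}(W;J_i,J_k)$ or the joint $I^{\tilde Z}(W;J)$ and gives no useful telescoping to sample-wise $I^{\tilde Z}(W;J_i)$. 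The midpoint choice $A = R - \tfrac{1}{2}(r_S + r'_S)$ cancels the $J$-dependence in the non-$\Delta_i$ factor of the product $A\Delta_i$, which is precisely what licenses the per-coordinate CMI bound.
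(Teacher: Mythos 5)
Your proposal is correct, and it takes a genuinely different route from the paper's proof. The paper also introduces the ghost risk $r'_S$, but then applies the crude splitting $\E\sbr{(U+V)^2}\le 2\E\sbr{U^2}+2\E\sbr{V^2}$ to $(R-r_S) = (R-r'_S) + (r'_S-r_S)$. This doubles the test-concentration term (giving $\tfrac{1}{2n}$ instead of your $\tfrac{1}{4n}$), forces a separate $\tfrac{2}{n}$ charge for the diagonal terms of $\E\sbr{(r'_S-r_S)^2}$, and then handles the cross terms $\E\sbr{\Delta_i\Delta_k}$ by applying \cref{lemma:xu-raginsky-lemma} to the auxiliary function $f(W,J_i,\tilde Z)=\E_{J_{-i}\mid J_i,W,\tilde Z}\bigl[\Delta_i\cdot\tfrac{1}{n}\sum_{k\neq i}\Delta_k\bigr]$ — precisely the construction you anticipated would be needed if one insisted on the naive splitting, and whose product-of-marginals expectation requires an extra argument because the inner conditional expectation over $J_{-i}$ could in principle depend on $j_i$. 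Your exact difference-of-squares identity $(R-r_S)^2=(R-r'_S)^2+2A(r'_S-r_S)$ with the midpoint residual $A=R-\tfrac12(r_S+r'_S)$ avoids all of this: the key observation that $A$ is symmetric in each pair and hence a function of $(W,\tilde Z)$ alone makes the per-coordinate Xu--Raginsky step immediate, since given $\tilde Z$ the product $A\Delta_i$ depends only on $(W,J_i)$, lies in $[-1,1]$ (hence is $1$-subgaussian), and has exactly zero mean under $P_{W\mid\tilde Z}\otimes P_{J_i\mid\tilde Z}$ because $\E_{J_i}\sbr{\Delta_i\mid W,\tilde Z}=0$. The net result, $G^{(2)}\le\tfrac{1}{4n}+\tfrac{2}{n}\sum_{i=1}^n\E_{\tilde Z}\sqrt{2I^{\tilde Z}(W;J_i)}$, strictly improves the stated constant $\tfrac{5}{2n}$ and yields a cleaner argument; the only thing the paper's decomposition buys in exchange is that its intermediate quantity $B$ is reused verbatim in the proof of \cref{thm:g2-pairwise-CMI}, where the pairwise cross terms $C_{i,k}$ are the objects of interest.
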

\begin{proof}
Let $\bar{J} \triangleq (1-J_1,\ldots,1 - J_n)$ be the negation of $J$. We have that
\begin{align}
\E_{P_{W,S}}\sbr{\rbr{R(W) - r_S(W)}^2} &= \E_{W,S}\sbr{\rbr{\frac{1}{n}\sum_{i=1}^n \ell(W,Z_i) - \E_{Z'\sim P_Z}\ell(W,Z')}^2}\\
&\hspace{-8em}=\E_{\tilde{Z},J,W}\sbr{\rbr{\frac{1}{n}\sum_{i=1}^n \ell(W,\tilde{Z}_{i, J_i}) - \E_{Z'\sim P_Z}\ell(W,Z')}^2}\\
&\hspace{-8em}=\E_{\tilde{Z},J,W}\sbr{\rbr{\frac{1}{n}\sum_{i=1}^n \ell(W,\tilde{Z}_{i, J_i}) - \E_{Z'\sim P_Z}\ell(W,Z') + \frac{1}{n}\sum_{i=1}^n \ell(W,\tilde{Z}_{i, \bar{J}_i}) - \frac{1}{n}\sum_{i=1}^n \ell(W,\tilde{Z}_{i, \bar{J}_i})}^2}\\
&\hspace{-8em}\le 2\underbrace{\E_{\tilde{Z},J,W}\sbr{\rbr{\frac{1}{n}\sum_{i=1}^n \ell(W,\tilde{Z}_{i, J_i}) - \frac{1}{n}\sum_{i=1}^n \ell(W,\tilde{Z}_{i, \bar{J}_i})}^2}}_{B}\\
&\hspace{-8em}\quad+2\E_{\tilde{Z},J,W}\sbr{\rbr{\frac{1}{n}\sum_{i=1}^n \ell(W,\tilde{Z}_{i, \bar{J}_i}) - \E_{Z'\sim P_Z}\ell(W,Z')}^2}&&\hspace{-14em}\text{(as $\E\sbr{(U +V)^2}\le 2\E\sbr{U^2} + 2\E\sbr{V^2}$)}\\
&\hspace{-8em}= 2B + 2\E_{\tilde{Z}, J, W}\sbr{\rbr{\frac{1}{n}\sum_{i=1}^n \rbr{\ell(W,\tilde{Z}_{i,\bar{J}_i}) - \E_{Z'\sim P_Z}\ell(W,Z')}}^2}\\
&\hspace{-8em}= 2B + 2\E_{W}\E_{\tilde{Z},J | W}\sbr{\rbr{\frac{1}{n}\sum_{i=1}^n \rbr{\ell(W,\tilde{Z}_{i,\bar{J}_i}) - \E_{Z'\sim P_Z}\ell(W,Z')}}^2}.\label{eq:test-concentration}
\end{align}
For any fixed $w\in\mathcal{W}$, the terms $\ell(W, \tilde{Z}_{i,\bar{J_i}})$ are independent of each other under $P_{\tilde{Z}, J | W=w}$.
Furthermore, $W$ and $\tilde{Z}_{i, \bar{J}_i}$ are independent.
Therefore, the average in \eqref{eq:test-concentration} is an average of $n$ i.i.d. random variables with zero mean. The variance of this average is at most $\frac{1}{4n}$.
Hence,
\begin{equation}
    \E_{P_{W,S}}\sbr{\rbr{R(W) - r_S(W)}^2} \le 2B + \frac{1}{2n}.\label{eq:bounding-g2-first-step}
\end{equation}
Let us bound $B$ now:
\begin{align}
B &= \E_{\tilde{Z},J,W}\sbr{\rbr{\frac{1}{n}\sum_{i=1}^n\rbr{ \ell(W,\tilde{Z}_{i, J_i}) - \ell(W,\tilde{Z}_{i, \bar{J}_i})}}^2}\\
    &= \frac{1}{n^2}\sum_{i=1}^n \E_{\tilde{Z},J,W}\sbr{\rbr{\ell(W,\tilde{Z}_{i, J_i}) - \ell(W,\tilde{Z}_{i, \bar{J}_i})}^2} \\
    &\quad+ \frac{1}{n^2}\sum_{i \neq k}\E_{\tilde{Z},J,W}\sbr{\rbr{\ell(W,\tilde{Z}_{i, J_i}) - \ell(W,\tilde{Z}_{i, \bar{J}_i})}\rbr{\ell(W,\tilde{Z}_{k, J_k}) - \ell(W,\tilde{Z}_{k, \bar{J}_k})}}\\
    &\le \frac{1}{n} + \E_{\tilde{Z},J,W}\sbr{\frac{1}{n}\sum_{i=1}^n\rbr{\rbr{\ell(W,\tilde{Z}_{i, J_i}) - \ell(W,\tilde{Z}_{i, \bar{J}_i})}\frac{1}{n}\sum_{k \neq i} \rbr{\ell(W,\tilde{Z}_{k, J_k}) - \ell(W,\tilde{Z}_{k, \bar{J}_k})}}}.\label{eq:bounding-g2-second-step}
\end{align}
Let us consider a fixed $i\in[n]$. Then
\begin{align*}
    &\E_{\tilde{Z},J,W}\sbr{\rbr{\ell(W,\tilde{Z}_{i, J_i}) - \ell(W,\tilde{Z}_{i, \bar{J}_i})}\frac{1}{n}\sum_{k \neq i} \rbr{\ell(W,\tilde{Z}_{k, J_k}) - \ell(W,\tilde{Z}_{k, \bar{J}_k})}}\\
    &\quad\quad=\E_{\tilde{Z}}\E_{J_i, W|\tilde{Z}}\underbrace{\E_{J_{-i} | J_i, W,  \tilde{Z}}\sbr{\rbr{\ell(W,\tilde{Z}_{i, J_i}) - \ell(W,\tilde{Z}_{i, \bar{J}_i})}\frac{1}{n}\sum_{k \neq i} \rbr{\ell(W,\tilde{Z}_{k, J_k}) - \ell(W,\tilde{Z}_{k, \bar{J}_k})}}}_{f(W,J_i,\tilde{Z})}.\label{eq:bounding-g2-third-step}\numberthis
\end{align*}
Note that $f(w,j_i,\tilde{z}) \in [-1, +1]$ for all $w\in\mathcal{W}, j \in \mathset{0,1}^n$, and $\tilde{z} \in \ZZ^{n\times 2}$. Therefore, by \cref{lemma:xu-raginsky-lemma}, for any value of $\tilde{Z}$
\begin{equation}
\E_{W,J_i|\tilde{Z}}\sbr{f(W, J_i, \tilde{Z})} \le \sqrt{2 I^{\tilde{Z}}(W; J_i)} + \E_{W\mid \tilde{Z}}\E_{J_i\mid \tilde{Z}}\sbr{f(W, J_i, \tilde{Z})}.\label{eq:bounding-g2-dv}
\end{equation}
It is left to notice that for any $w\in\mathcal{W}$, $\E_{J_i | \tilde{Z}}\sbr{f(w, J_i | \tilde{Z})} = 0$, as under $P_{J_i | \tilde{Z}}$ the term $\ell(w,\tilde{Z}_{i, J_i})-\ell(w,\tilde{Z}_{i,\bar{J}_i})$ has zero mean.
Therefore, \eqref{eq:bounding-g2-dv} reduces to
\begin{equation}
\E_{W,J_i|\tilde{Z}}\sbr{f(W, J_i, \tilde{Z})} \le \sqrt{2 I^{\tilde{Z}}(W; J_i)}.\label{eq:bounding-g2-dv-reduced}
\end{equation}
Putting together \eqref{eq:bounding-g2-first-step}, \eqref{eq:bounding-g2-second-step}, \eqref{eq:bounding-g2-third-step} and \eqref{eq:bounding-g2-dv-reduced}, we get that
\begin{equation}
    \E_{P_{W,S}}\sbr{\rbr{R(W) - r_S(W)}^2} \le \frac{5}{2n} + \frac{2}{n}\sum_{i=1}^n \E_{\tilde{Z}}\sqrt{2 I^{\tilde{Z}}(W; J_i)}.
\end{equation}
\end{proof}

\begin{theorem}
In the random-subsampling setting, let $W \sim Q_{W|S}$. If $\ell(w,z)\in[0,1]$, then
\begin{align}
\E_{P_{W,S}}\sbr{\rbr{R(W) - r_S(W)}^2} &\le \frac{5}{2n} + \frac{2}{n^2}\sum_{i \neq k} \sqrt{2I(\ell(W,\tilde{Z}_i),\ell(W,\tilde{Z}_k); J_i, J_k)},\label{eq:g2-pairwise-ecmi-strongest}\\
\E_{P_{W,S}}\sbr{\rbr{R(W) - r_S(W)}^2} &\le \frac{5}{2n} + \frac{2}{n^2}\sum_{i \neq k} \E\sqrt{2I^{\tilde{Z}_i,\tilde{Z}_k}(\ell(W,\tilde{Z}_i),\ell(W,\tilde{Z}_k); J_i, J_k)},\label{eq:g2-pairwise-ecmi-strong}\\
\E_{P_{W,S}}\sbr{\rbr{R(W) - r_S(W)}^2} &\le \frac{5}{2n} + \frac{2}{n^2}\sum_{i \neq k} \E\sqrt{2I^{\tilde{Z}}(\ell(W,\tilde{Z}_i),\ell(W,\tilde{Z}_k); J_i, J_k)}\label{eq:g2-pairwise-ecmi-weak},
\end{align}
and
\begin{align}
\E_{P_{W,S}}\sbr{\rbr{R(W) - r_S(W)}^2} &\le \frac{5}{2n} + \frac{2}{n^2}\sum_{i \neq k} \E\sqrt{2I^{\tilde{Z}_i,\tilde{Z}_k}(W; J_i, J_k)},\label{eq:g2-pairwise-cmi-strong}\\
\E_{P_{W,S}}\sbr{\rbr{R(W) - r_S(W)}^2} &\le \frac{5}{2n} + \frac{2}{n^2}\sum_{i \neq k} \E\sqrt{2I^{\tilde{Z}}(W; J_i, J_k)}\label{eq:g2-pairwise-cmi-weak}.
\end{align}
\label{thm:g2-pairwise-CMI}
\end{theorem}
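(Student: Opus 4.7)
The plan is to lift the proof of \cref{thm:g2-weak-sample-wise-bound} from single-index to pairwise mutual informations. First I would reuse the ghost-sample decomposition verbatim from that proof: writing
\[
r_S(W) - R(W) = \frac{1}{n}\sum_{i=1}^n \bigl(\ell(W,\tilde{Z}_{i,J_i}) - \ell(W,\tilde{Z}_{i,\bar{J}_i})\bigr) + \bigl(\tfrac{1}{n}\sum_{i=1}^n \ell(W,\tilde{Z}_{i,\bar{J}_i}) - \E_{Z'\sim P_Z}\ell(W,Z')\bigr),
\]
squaring, and applying $(a+b)^2 \le 2a^2 + 2b^2$ together with the $\tfrac{1}{4n}$ bound on the test-error term (obtained in that proof via $n$ conditionally i.i.d.\ zero-mean terms, each bounded in $[-1,1]$). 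This reduces all five claims to upper bounds on
\[
B := \E\sbr{\rbr{\frac{1}{n}\sum_{i=1}^n \bigl(\ell(W,\tilde{Z}_{i,J_i}) - \ell(W,\tilde{Z}_{i,\bar{J}_i})\bigr)}^2}.
\]

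Next I would expand $B$ into its $n$ diagonal and $n(n-1)$ off-diagonal summands. The diagonal terms each lie in $[0,1]$, so together they contribute at most $\tfrac{1}{n}$. For each off-diagonal cross term
\[
C_{i,k} := \E\sbr{\bigl(\ell(W,\tilde{Z}_{i,J_i}) - \ell(W,\tilde{Z}_{i,\bar{J}_i})\bigr)\bigl(\ell(W,\tilde{Z}_{k,J_k}) - \ell(W,\tilde{Z}_{k,\bar{J}_k})\bigr)},
\]
I would invoke \cref{lemma:xu-raginsky-lemma} once per claim with an appropriate $(X,Y)$ and conditioning variable. For \cref{eq:g2-pairwise-ecmi-strongest,eq:g2-pairwise-ecmi-strong,eq:g2-pairwise-ecmi-weak} I take $X=(\ell(W,\tilde{Z}_i),\ell(W,\tilde{Z}_k))$ and $Y=(J_i,J_k)$, either unconditioned or after conditioning on $(\tilde{Z}_i,\tilde{Z}_k)$ or on $\tilde{Z}$. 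For \cref{eq:g2-pairwise-cmi-strong,eq:g2-pairwise-cmi-weak} I take $X=W$ and $Y=(J_i,J_k)$, after first conditioning on $(\tilde{Z}_i,\tilde{Z}_k)$ or $\tilde{Z}$; the conditioning is required here so that the cross term is a measurable function of $(X,Y)$ alone. In every case the integrand is a product of two $[-1,1]$-valued factors and is therefore $1$-subgaussian under any product measure, which yields the $\sqrt{2I(\cdot;\cdot)}$ factor appearing in the theorem.

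The key algebraic observation, used identically in all five cases, is that $\E_{P_X P_Y}[f]=0$. Under any of the product measures in question, $J_i$ and $J_k$ remain independent $\mathrm{Bernoulli}(1/2)$ variables and are independent of the $X$-side, so for any fixed $w$ and any fixed pair $\tilde{z}_i$,
\[
\E_{J_i}\sbr{\ell(w,\tilde{z}_{i,J_i}) - \ell(w,\tilde{z}_{i,\bar{J}_i})} = \tfrac{1}{2}\rbr{\ell(w,\tilde{z}_{i,0}) - \ell(w,\tilde{z}_{i,1})} + \tfrac{1}{2}\rbr{\ell(w,\tilde{z}_{i,1}) - \ell(w,\tilde{z}_{i,0})} = 0,
\]
and independence of $J_i,J_k$ factors the cross-term expectation into a product of two such zeros. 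Substituting $|C_{i,k}| \le \sqrt{2I(\cdot;\cdot)}$ back gives $B \le \tfrac{1}{n} + \tfrac{1}{n^2}\sum_{i\neq k}\sqrt{2I(\cdot;\cdot)}$, and combining with $G^{(2)} \le 2B + \tfrac{1}{2n}$ produces the $\tfrac{5}{2n}$ constant and the leading factor of $2$ common to all five inequalities.

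I expect no substantive obstacle beyond bookkeeping: one must check that, after freezing the chosen $\tilde{Z}$-subset, $f$ really is a function of $(X,Y)$ alone (trivial in the e-CMI versions, requiring the conditioning to include both $\tilde{Z}_i$ and $\tilde{Z}_k$ in the CMI versions) and that the corresponding product of marginals retains the independent Bernoulli law on $(J_i,J_k)$ needed for the vanishing-mean step. Apart from this, the proof is a routine pairwise generalization of that of \cref{thm:g2-weak-sample-wise-bound}.
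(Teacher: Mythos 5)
Your proposal is correct and follows essentially the same route as the paper's proof: the identical ghost-sample decomposition giving $G^{(2)} \le 2B + \frac{1}{2n}$, the same diagonal/cross-term split of $B$ contributing $\frac{1}{n}$ plus the $C_{i,k}$ terms, and the same application of \cref{lemma:xu-raginsky-lemma} with the observation that the product-of-marginals expectation vanishes because $J_i, J_k$ remain independent uniform bits. The only (immaterial) difference is that the paper derives \eqref{eq:g2-pairwise-cmi-strong} and \eqref{eq:g2-pairwise-cmi-weak} from the e-CMI versions via the data-processing inequality, whereas you apply the lemma directly with $X = W$ after conditioning on $(\tilde{Z}_i,\tilde{Z}_k)$ or $\tilde{Z}$; both arguments are valid and yield the same constants.
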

\begin{proof}
It is enough to prove only \eqref{eq:g2-pairwise-ecmi-strongest}, \eqref{eq:g2-pairwise-ecmi-strong}, and \eqref{eq:g2-pairwise-ecmi-weak}. Inequality \eqref{eq:g2-pairwise-cmi-strong} can be derived from \eqref{eq:g2-pairwise-ecmi-strong} using data processing inequality, while \eqref{eq:g2-pairwise-cmi-weak} can be derived from \eqref{eq:g2-pairwise-ecmi-weak}.

As in the proof of \cref{thm:g2-weak-sample-wise-bound}, 
\begin{align}
    \E_{P_{W,S}}\sbr{\rbr{R(W) - r_S(W)}^2} \le 2\underbrace{\E_{\tilde{Z},J,W}\sbr{\rbr{\frac{1}{n}\sum_{i=1}^n \ell(W,\tilde{Z}_{i, J_i}) - \frac{1}{n}\sum_{i=1}^n \ell(W,\tilde{Z}_{i, \bar{J}_i})}^2}}_{B} + \frac{1}{2n}.
\end{align}
where $\bar{J} \triangleq (1-J_1,\ldots,1 - J_n)$ and 
\begin{equation}
   B \le \frac{1}{n} + \frac{1}{n^2}\sum_{i \neq k}\underbrace{\E_{\tilde{Z},J,W}\sbr{\rbr{\ell(W,\tilde{Z}_{i, J_i}) - \ell(W,\tilde{Z}_{i, \bar{J}_i})}\rbr{\ell(W,\tilde{Z}_{k, J_k}) - \ell(W,\tilde{Z}_{k, \bar{J}_k})}}}_{C_{i,k}}.
\end{equation}
Let $\tilde{\Lambda} \in [0, 1]^{n\times 2}$ be the losses on examples of $\tilde{Z}$:
\begin{equation}
    \tilde{\Lambda}_{i, c} = \ell(W, \tilde{Z}_{i, c}),\ \ \forall i\in [n], c\in\mathset{0,1}.
\end{equation}
Then we can write
\begin{align}
    C_{i,k} = \E_{\tilde{\Lambda}_i,\tilde{\Lambda}_k,J_i, J_k}\sbr{\rbr{\tilde{\Lambda}_{i, J_i} - \tilde{\Lambda}_{i, \bar{J}_i}}\rbr{\tilde{\Lambda}_{k, J_k} - \tilde{\Lambda}_{k, \bar{J}_k}}}.
\end{align}
and use Lemma 1 of \citet{xu2017information} to arrive at:
\begin{align}
    C_{i,k} &\le \E_{\tilde{\Lambda}_i,\tilde{\Lambda}_k}\E_{J_i, J_k}\sbr{\rbr{\tilde{\Lambda}_{i, J_i} - \tilde{\Lambda}_{i, \bar{J}_i}}\rbr{\tilde{\Lambda}_{k, J_k} - \tilde{\Lambda}_{k, \bar{J}_k}}} + \sqrt{2 I(\tilde{\Lambda}_i, \tilde{\Lambda}_k; J_i, J_k)}\\
    &=\sqrt{2 I(\tilde{\Lambda}_i, \tilde{\Lambda}_k; J_i, J_k)},
\end{align}
where the last equality holds as for any fixed value of $(\tilde{\Lambda}_i, \tilde{\Lambda}_k)$ the terms $\rbr{\tilde{\Lambda}_{i, J_i} - \tilde{\Lambda}_{i, \bar{J}_i}}$ and $\rbr{\tilde{\Lambda}_{k, J_k} - \tilde{\Lambda}_{k, \bar{J}_k}}$ are independent and have zero mean under $P_{J_i, J_k}$.

To derive \eqref{eq:g2-pairwise-ecmi-strong}, one can condition on $\tilde{Z}_i, \tilde{Z}_k$:
\begin{equation}
    C_{i,k} = \E_{\tilde{Z}_i,\tilde{Z}_k}\E_{\tilde{\Lambda}_i,\tilde{\Lambda}_k,J_i, J_k | \tilde{Z}_i,\tilde{Z}_k}\sbr{\rbr{\tilde{\Lambda}_{i, J_i} - \tilde{\Lambda}_{i, \bar{J}_i}}\rbr{\tilde{\Lambda}_{k, J_k} - \tilde{\Lambda}_{k, \bar{J}_k}}},
\end{equation}
and then apply \cref{lemma:xu-raginsky-lemma} for the inner expectation.
Similarly, \eqref{eq:g2-pairwise-ecmi-weak} can be derived by conditioning on $\tilde{Z}$ and then applying \cref{lemma:xu-raginsky-lemma}.
\end{proof}

\begin{remark}
Wit the expectation inside the square root, inequalities \eqref{eq:g2-pairwise-ecmi-strongest}, \eqref{eq:g2-pairwise-ecmi-strong}, and \eqref{eq:g2-pairwise-ecmi-weak} would be in non-decreasing order. With the expectation outside the square root, this relation might not be true.
\end{remark}
\end{document}